\documentclass{article}

\usepackage[final,main]{neurips_2026}

\usepackage[utf8]{inputenc}
\usepackage[T1]{fontenc}
\usepackage{hyperref}
\usepackage{url}
\usepackage{booktabs}
\usepackage{amsfonts}
\usepackage{amsmath}
\usepackage{amssymb}
\usepackage{amsthm}
\usepackage{nicefrac}
\usepackage{microtype}
\usepackage{xcolor}
\usepackage{graphicx}
\usepackage{subfigure}
\usepackage{algorithm}
\usepackage{algpseudocode}
\usepackage{enumitem}

\theoremstyle{definition}
\newtheorem{definition}{Definition}
\theoremstyle{plain}
\newtheorem{theorem}{Theorem}

\newtheorem{lemma}{Lemma}

\title{FlashMask-3: Efficient and Expressive Mask-Aware Distributed Attention}

\author{%
\textbf{Guoxia Wang}\thanks{Equal contribution.}, ~~\textbf{Qianyue He}\footnotemark[1], ~~\textbf{Siming Wu}\footnotemark[1], ~~\textbf{Haoyang Xie}\footnotemark[1],\\
\textbf{Qiwen Bao}, ~~\textbf{Jinle Zeng}, 
~~\textbf{Jiabin Yang}, ~~\textbf{Tian Wu}, ~~\textbf{Dianhai Yu}\thanks{Corresponding author.} \\
\\
Baidu Inc.\\
\texttt{\{wangguoxia, heqianyue, wusiming, xiehaoyang, yudianhai\}@baidu.com}
}

\begin{document}

\maketitle

\begin{abstract}
Long-context training of large language and multimodal models couples expressive structured attention masks (document, prefix-LM, share-question, sliding-window, block-sparse) with multi-node context parallelism, exposing the mask as a dominant bottleneck. The column-wise sparse mask of FlashMask compresses such masks from $O(n^2)$ to $O(n)$ via four query-axis boundary vectors and covers a broad family of structured patterns. Implemented for Ampere GPUs, FlashMask does not exploit FlashAttention-3's TMA/WGMMA warp-specialized pipeline on Hopper GPUs, and offers no distributed support. We observe that the column-wise sparse mask is indexed by key positions but stores query-axis interval boundaries. Under any multi-chunk context-parallel assignment, this property reduces per-rank mask localization to a provably zero-communication \emph{clip--shift--max} on the boundary vectors, directly compatible with the single-GPU kernel. Building on this property, FlashMask-3 introduces a mask-aware Hopper kernel with a three-role warp specialization atop the FlashAttention-3 TMA/WGMMA pipeline, together with a mask-aware distributed extension: zero-communication mask localization, a sparsity-aware load balancer (HCC-IPO), and a mask-aware compute-communication overlap with mask-driven communication pruning, whose forward all-gather additionally adopts topology-aware hierarchical routing. On a single H100 GPU across 12 structured masks at context lengths from 4K to 128K, FlashMask-3 improves achieved TFLOPs over FlashMask, FlexAttention, and MagiAttention by 45\% to 141\%, 34\% to 71\%, and 2\% to 44\%, respectively. Under context-parallel degrees from 4 to 32 with 8K sequence per device, it attains 31\% to 75\% compute-communication overlap efficiency and improves throughput over Megatron-Core CP (ring), Megatron-Core CP (all-gather) and MagiAttention by $1.80\times$ to $3.71\times$, $1.71\times$ to $2.92\times$ and $1.09\times$ to $1.46\times$ respectively. In end-to-end training of a 118B-A14B MoE model at a 128K context on 128 H100 GPUs, it raises throughput by 7.3\% to 11.0\% over a Dual Chunk all-gather baseline, with attention forward+backward 24.5\% to 44.9\% faster. The source code is available at \url{https://github.com/PaddlePaddle/FlashMask}.
\end{abstract}

\section{Introduction}
\label{sec:introduction}

Long-context training of large language and multimodal models increasingly relies on structured attention masks, including document masks~\citep{dao2022flashattention}, prefix-LM masks~\citep{ernie2026technicalreport}, share-question masks~\citep{deng2025bagel}, sliding-window masks~\citep{beltagy2020longformer}, and block-sparse patterns~\citep{child2019sparse,yuan2025nsa,lu2025moba}, to encode training-data semantics within a single packed batch. As context grows toward hundreds of thousands of tokens and training scales to multi-node context parallelism~\citep{liu2023ringattention,jacobs2023deepspeedulysses,megatron-lm}, masks become a first-class bottleneck: mask materialization, mask-aware kernel scheduling, and mask-induced load imbalance~\citep{jiang2025dcp,tao2025magiattention} come to dominate step time. Programmable attention compilers~\citep{he2024flexattention} handle arbitrary masks but trail FlashAttention-3 (FA3)~\citep{shah2024flashattention3} in Hopper throughput; FA3 itself first-classes only causal, full, and varlen-document masks. The column-wise sparse representation of FlashMask ~\citep{wang2025flashmask} fills the expressiveness gap by encoding masks through four $O(n)$ boundary vectors, covering 12+ structured patterns within an exact tiled kernel~\citep{dao2022flashattention,dao2023flashattention2}. However, FlashMask was implemented on Ampere GPUs and predates FA3: it exploits neither Hopper's TMA/WGMMA warp-specialized pipeline~\citep{shah2024flashattention3} nor any form of distributed context parallelism~\citep{liu2023ringattention,brandon2023stripedattention,megatron-lm}.

We observe a structural property that resolves the distributed gap. Because the column-wise sparse mask is indexed by key positions but stores query-axis intervals~\citep{wang2025flashmask}, per-rank mask localization under any multi-chunk CP assignment reduces to an elementwise \emph{clip--shift--max} on the boundary vectors, requiring no cross-rank communication and producing a rank-local tensor directly compatible with the single-GPU kernel (Theorem~\ref{thm:distributed-mask-correctness}). Building on this property, we present FlashMask-3,\footnote{The name aligns with FlashAttention-3, on which our Hopper kernel is built, mirroring how FlashMask~\citep{wang2025flashmask} built on FlashAttention-2~\citep{dao2023flashattention2}.} which organizes mask-awareness along two axes: (i)~a mask-aware Hopper kernel with three-role warp specialization integrating column-wise sparse scanning into the FA3 TMA/WGMMA pipeline (\S\ref{sec:method-kernel}); and (ii)~a mask-aware distributed extension combining zero-communication mask localization (\S\ref{sec:method-distributed-mask}), a sparsity-aware hierarchical load balancer(\S\ref{sec:method-load-balancing}), and a mask-aware compute-communication overlap~\citep{chang2024flux,nvidia2023nvshmem} with topology-aware hierarchical routing and mask-driven communication pruning (\S\ref{sec:method-overlap}).

Our main contributions are as follows:
\begin{itemize}[leftmargin=1.5em,topsep=2pt,itemsep=1pt,parsep=0pt]
\item \textbf{Mask-aware Hopper kernel.} A three-role warp specialization (KV Index Generator, KV Loader, MMA Consumer) integrates column-wise sparse mask processing into the asynchronous TMA/WGMMA pipeline at near-zero overhead, complemented by a preemptive persistent-tile scheduler that restores SM-level balance under irregular masks.
\item \textbf{Mask-aware distributed extension.} A zero-communication \emph{clip--shift--max} mask localization with provable correctness; a sparsity-aware hierarchical load balancer (HCC-IPO); and a mask-aware compute-communication overlap with mask-driven communication pruning, whose forward all-gather uses topology-aware hierarchical routing and whose backward reduce-scatter is overlapped with attention recomputation through bitmask-AMO (Atomic Memory Operations) synchronization.
\item \textbf{Empirical evaluation.} On a single H100 GPU across 12 structured masks at context lengths from 4K to 128K, FlashMask-3 improves achieved TFLOPs over FlashMask, FlexAttention, and MagiAttention by 45\% to 141\%, 34\% to 71\%, and 2\% to 44\%, respectively. Under context-parallel degrees from 4 to 32 with 8K sequence per device, it attains 31\% to 75\% compute-communication overlap efficiency and improves throughput over Megatron-Core CP (ring), Megatron-Core CP (all-gather) and MagiAttention by $1.80\times$ to $3.71\times$, $1.71\times$ to $2.92\times$ and $1.09\times$ to $1.46\times$ respectively. In end-to-end training of a 118B-A14B MoE model at a 128K context on 128 H100 GPUs (CP degree 16), our distributed extension raises the training throughput by 7.3\% to 11.0\% over the same system using Dual Chunk balancing and all-gather communication.
\end{itemize}

\section{Background}
\label{sec:background}

\subsection{FlashAttention-3 on Hopper GPUs}
\label{sec:background-fa3}

The FlashAttention family~\citep{dao2022flashattention,rabe2021selfattention,milakov2018online,dao2023flashattention2} computes exact attention $O = \mathrm{softmax}(QK^\top\!/\sqrt{d})V$ without materializing the $n\times n$ score matrix, through tiling and online softmax.  FlashAttention-3~\citep{shah2024flashattention3} targets Hopper GPUs by adding asynchronous TMA loads, WGMMA instructions, and warp-specialized producer--consumer pipelines: a producer warp group issues TMA loads while consumer warp groups execute WGMMA, coordinated through a small set of cross-warp barriers.  FlashMask-3 builds on this asynchronous backbone, inserting mask processing as an additional warp-specialized stage rather than inline branches in the main compute loop (\S\ref{sec:method-kernel}).

\subsection{Column-wise sparse mask representation}
\label{sec:background-flashmask}

FlashMask~\citep{wang2025flashmask} encodes structured attention masks from the perspective of key tokens.  The score matrix is split along the diagonal into lower-triangular and upper-triangular regions, and the mask at each key $j$ is encoded by four boundary vectors $\mathrm{LTS},\mathrm{LTE},\mathrm{UTS},\mathrm{UTE}\in\mathbb{Z}_{\ge0}^{n}$, with the masked query rows at key $j$ given by $[\mathrm{LTS}_j,\mathrm{LTE}_j)\cup[\mathrm{UTS}_j,\mathrm{UTE}_j)$.  The tensor is \emph{indexed by key positions} but \emph{stores query-axis values}, a structural asymmetry that we exploit for zero-communication distributed mask localization in \S\ref{sec:method-distributed-mask}.  Inside a tiled kernel, blockwise max/min along the key axis classifies each query--key tile as fully-masked, partially-masked, or unmasked, supporting tile skipping (\S\ref{sec:method-kernel}) and the cost model in \S\ref{sec:method-load-balancing}.

Two alternative routes target masks beyond the FlashAttention-native subset.  FlexAttention~\citep{he2024flexattention} JIT-compiles an arbitrary user-defined \texttt{mask\_mod} function via a Triton-based DSL~\citep{tillet2019triton}; the compiled kernel handles arbitrary mask logic but follows a code path separate from FA3 and incurs per-mask compilation cost.  MagiAttention~\citep{tao2025magiattention} introduces an AttnSlice formulation paired with a dedicated Flex-Flash-Attention (FFA) kernel that natively supports arbitrary mask types; its distributed extension materializes per-rank mask structure via slice-cutting operations.  In contrast, FlashMask-3 retains the $O(n)$ column-wise sparse representation and reduces per-rank mask localization to a \emph{clip--shift--max} on the boundary vectors at zero cross-rank communication (\S\ref{sec:method-distributed-mask}), with no per-rank slice materialization or kernel recompilation.

\subsection{Context parallelism and load balancing}
\label{sec:background-cp}

Context parallelism (CP) shards the query sequence across $P$ ranks; each rank computes attention for its local queries against the global $K,V$ stream exchanged across ranks.  Ring Attention~\citep{liu2023ringattention} circulates $K,V$ around a device ring overlapped with blockwise attention; Striped Attention~\citep{brandon2023stripedattention} interleaves tokens for causal-load balance; DeepSpeed Ulysses~\citep{jacobs2023deepspeedulysses} converts sequence parallelism into head parallelism via all-to-all; Megatron CP~\citep{megatron-lm} pairs early and late query chunks (\emph{dual chunking}) for balanced causal workload and obtains global $K,V$ through a forward all-gather and a backward reduce-scatter.  Under expressive masks, the monotonic-load assumption underlying dual chunking breaks, coupling two further challenges: balancing mask-induced sparse compute across ranks, and overlapping forward all-gather and backward reduce-scatter with mask-aware computation, both addressed in \S\ref{sec:method}.

\subsection{Compute--communication overlap}
\label{sec:background-comm-overlap}

Compute--communication overlap operates at two levels.  At the kernel-stream level, FLUX~\citep{chang2024flux} fuses all-gather and reduce-scatter into GEMM tiles via signal-based dependencies, achieving near-complete overlap for linear layers; Ring Attention~\citep{liu2023ringattention} overlaps $K,V$ circulation with blockwise attention via dual CUDA streams.  Extending tile-level fusion to attention is non-trivial because online softmax requires per-tile rescaling that degrades throughput upon splitting.  At the topology level, NVSHMEM~\citep{nvidia2023nvshmem} exposes GPU-initiated one-sided put/get over symmetric memory, enabling fine-grained overlap without CPU staging.  These mechanisms form the substrate on which \S\ref{sec:method-overlap} builds a mask-aware overlap that prunes communication of fully-masked KV chunks and tracks per-segment readiness via bitmask-AMO synchronization.

\medskip
\noindent A comprehensive discussion of additional related work is provided in Appendix~\ref{app:extended-related-work}.

\section{FlashMask-3}
\label{sec:method}

FlashMask-3 takes a column-wise sparse mask alongside $(Q,K,V)$ tensors, with the same mask schema and the same kernel binary serving both single-GPU and query-sharded context-parallel execution. Mask-awareness propagates through four layers of the execution stack via four cooperating mechanisms (\S\ref{sec:method-kernel}--\S\ref{sec:method-overlap}), all reusing the same full/partial/unmasked block classification off the column-wise sparse boundary vectors at kernel runtime, host-side preprocessing, and communication-side filtering. Figure~\ref{fig:flashmask-v3-overview} summarizes the cooperation.

\begin{figure}[t]
\centering
\includegraphics[width=\textwidth]{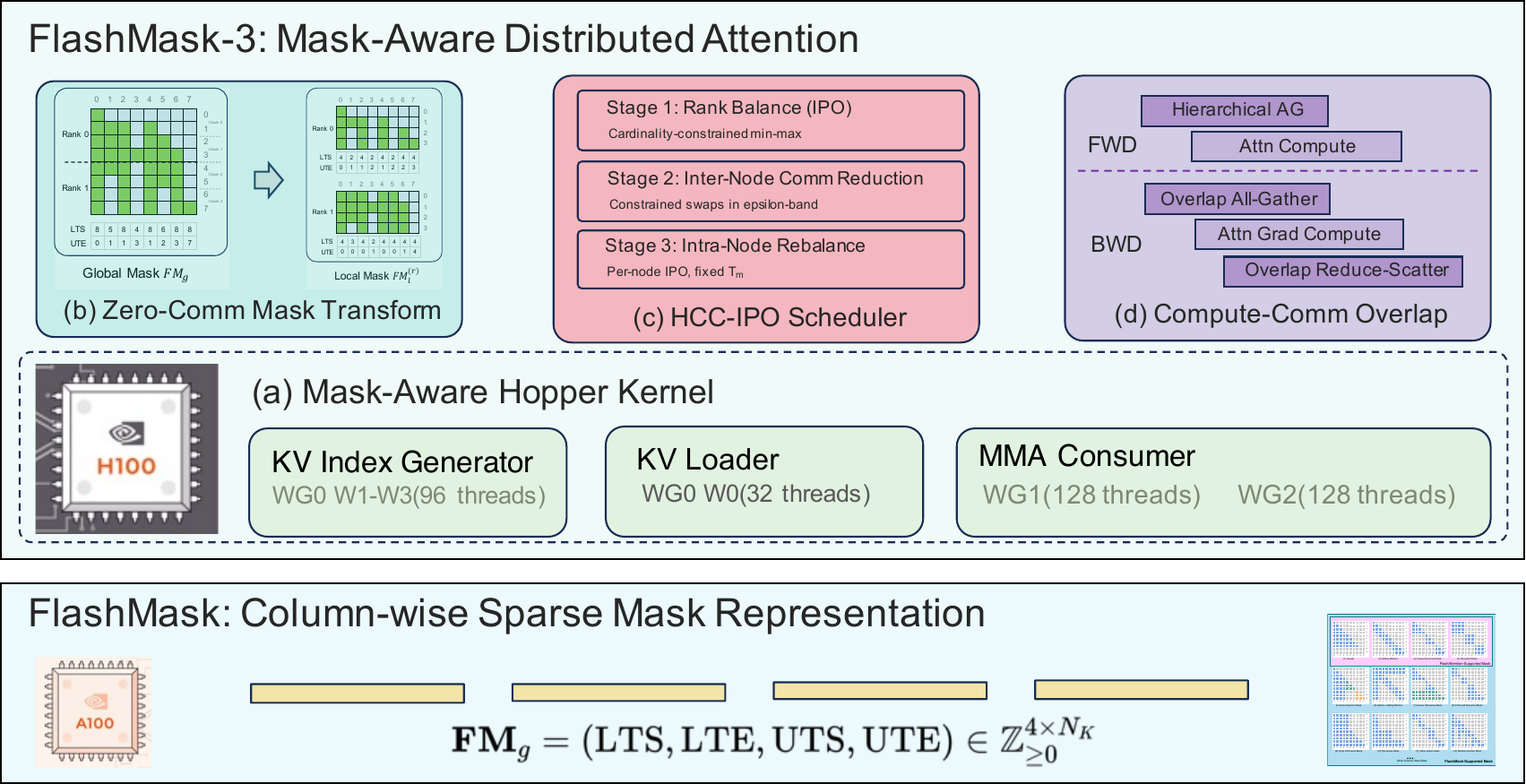}
\vskip -0.05cm
\caption{Overview of FlashMask-3 across four layers of the execution stack.}
\label{fig:flashmask-v3-overview}
\vskip -0.2cm
\end{figure}

\subsection{Hopper-optimized mask-aware kernel}
\label{sec:method-kernel}

On Hopper, FA3 overlaps TMA-issued KV loads with WGMMA score and online softmax via two warp-group roles~\citep{shah2024flashattention3}; inserting column-wise sparse mask scanning into either role would stall the TMA queue or introduce a barrier inside the WGMMA critical path. We observe that FA3's Producer warp group dedicates 128 threads to TMA issue while a single thread suffices to launch each op, leaving 96 threads idle every iteration; FlashMask-3 reclaims these threads as a third warp-specialized role, the \emph{KV Index Generator}, which scans the column-wise sparse boundaries described in \S\ref{sec:background-flashmask} and emits a queue of valid blocks classified as fully-masked, partially-masked, or unmasked via the blockwise max/min of $\{\mathrm{LTS},\mathrm{LTE},\mathrm{UTS},\mathrm{UTE}\}$. The KV Loader consumes the queue and issues TMA loads only for listed blocks; the MMA Consumer runs the WGMMA score path, applying a row-interval mask only on partially-masked blocks (Figure~\ref{fig:three-role}). The same classification feeds host-side preprocessing in \S\ref{sec:method-load-balancing} and communication-side filtering in \S\ref{sec:method-overlap} on independent code paths.

\begin{figure}[t]
\centering
\includegraphics[width=\textwidth]{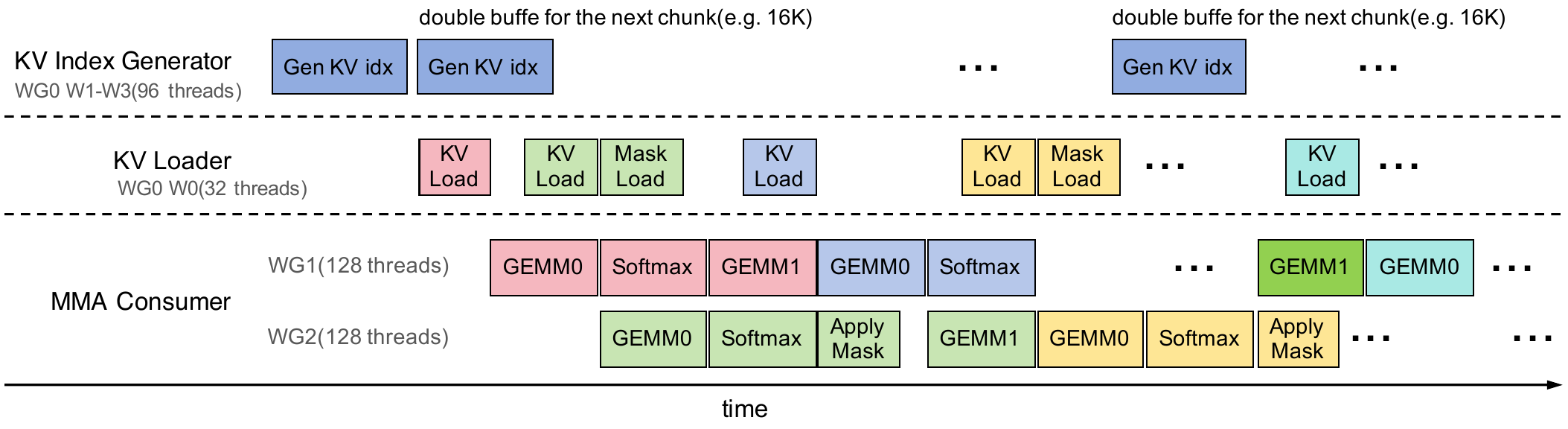}
\vskip -0.2cm
\caption{Three-role warp specialization in the FlashMask-3 Hopper kernel: KV Index Generator (mask scan), KV Loader (TMA load), and MMA Consumer (WGMMA score), connected by three asynchronous pipelines.}
\label{fig:three-role}
\vskip -0.2cm
\end{figure}

Beyond warp specialization, irregular masks make per-$Q$-block valid-$K$-block counts vary by an order of magnitude, leaving SMs that own long-tail $Q$-blocks straggling under static tile-to-SM dispatch. We adopt a \emph{Preemptive Persistent Tile (PPT) scheduler} in which each CTA atomically claims its next $Q$-block from a global counter, with the KV Index Generator issuing the atomic op so the claim overlaps with mask scanning and stays off the WGMMA critical path.

\subsection{Zero-communication distributed mask transform}
\label{sec:method-distributed-mask}

We localize the global column-wise sparse mask $\mathbf{FM}_g=(\mathrm{LTS},\mathrm{LTE},\mathrm{UTS},\mathrm{UTE})\in\mathbb{Z}_{\ge 0}^{4\times N_K}$~\citep{wang2025flashmask}, where for each key $k$, the masked query intervals $[\mathrm{LTS}_k,\mathrm{LTE}_k)\cup[\mathrm{UTS}_k,\mathrm{UTE}_k)$ are defined by the lower- and upper-triangular pairs. To distribute this global mask to each rank without communication, we apply an elementwise \emph{clip--shift--max} transform independently to each boundary vector. Suppose rank $r$ owns $n_r$ disjoint query chunks $\{[O_i,O_i+C)\}_{i=0}^{n_r-1}$, sorted by global offsets $O_0 < O_1 < \cdots < O_{n_r-1}$, with chunk size $C$ set by the upstream scheduler. For any global boundary vector $\mathbf{x}\in\mathbb{Z}_{\ge 0}^{N_K}$, we define the single-chunk clip as $\mathcal{L}_i(\mathbf{x})_k := \operatorname{clip}(\mathbf{x}_k - O_i,\,0,\,C)$. The rank-local boundary vector $\mathbf{x}^{(r)}$ is then
\begin{equation}
\label{eq:distributed-mask-transform}
\mathbf{x}^{(r)}_k = \max_{0\le i < n_r}\Bigl(\mathcal{L}_i(\mathbf{x})_k + \mathbf{1}\!\bigl[\mathcal{L}_i(\mathbf{x})_k > 0\bigr] \cdot iC\Bigr),\qquad k \in [0, N_K).
\end{equation}

Breaking down \eqref{eq:distributed-mask-transform}: 
(1) $\mathcal{L}_i$ maps the global boundary to chunk $i$'s local $[0, C)$ coordinate and clips out-of-chunk values; 
(2) $\mathcal{L}_i + iC$ embeds chunk $i$'s local values into the rank-local concatenated space $\bigsqcup_{i=0}^{n_r-1}[iC, (i{+}1)C)$, preserving the global chunk order; 
(3) the indicator ensures only non-empty clips are shifted, avoiding pollution from vacuous chunks; 
(4) $\max_i$ aggregates across all $n_r$ chunks, ensuring only the chunk containing the boundary value $\mathbf{x}_k$ contributes. 
Figure~\ref{fig:distributed-mask-transform} illustrates this \emph{clip--shift--max} process for representative LTS and UTE boundary vectors.

Theorem~\ref{thm:distributed-mask-correctness} in Appendix~\ref{app:proofs-distributed-mask} shows that the rank-local boundary intervals exactly match the global intervals at the corresponding global query positions. Thus, $\mathbf{FM}^{(r)}_l$ encodes the per-rank mask without information loss. The transform requires only the global mask $\mathbf{FM}_g$ and the rank-owned offsets $\{O_i\}_{i=0}^{n_r-1}$, so there is no need for cross-rank communication. The output preserves the kernel interface described in \S\ref{sec:method-kernel}, and any upstream chunk assignment (\S\ref{sec:method-load-balancing}) is integrated through this single tensor-level interface.

\begin{figure}[t]
\centering
\includegraphics[width=\textwidth]{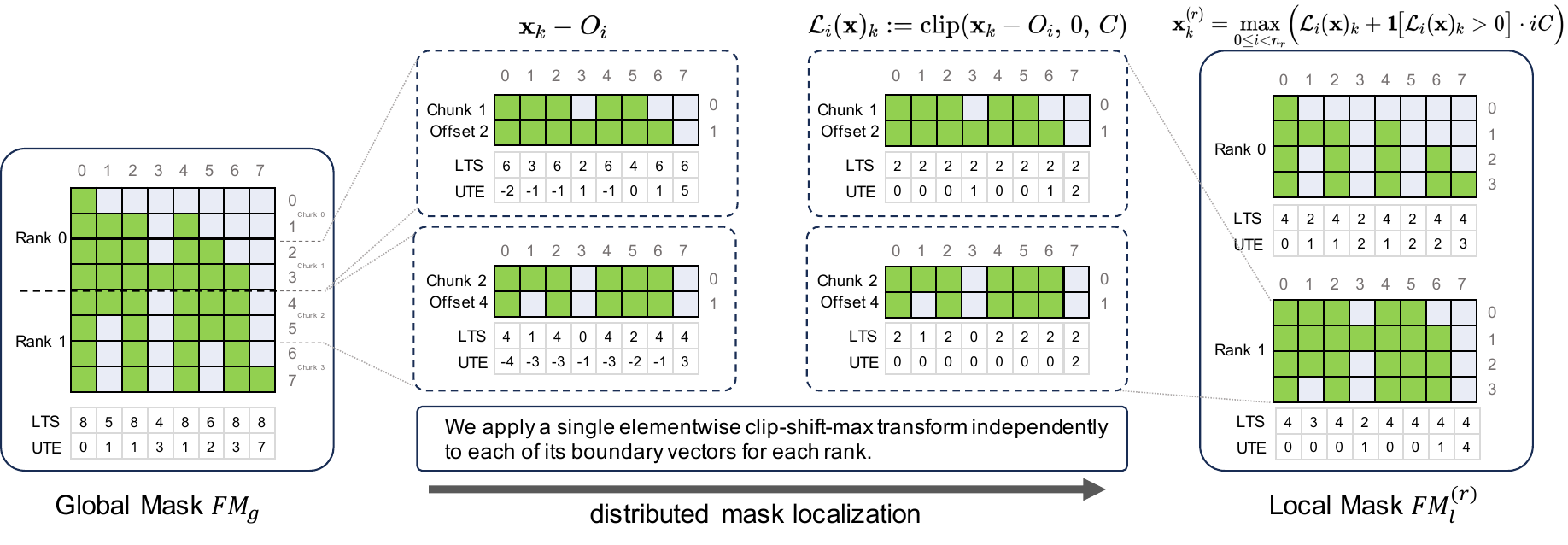}
\vskip -0.2cm
\caption{Zero-communication mapping of global mask boundaries (LTS, UTE) to rank-local coordinates using the clip–shift–max transform (Eq.~\ref{eq:distributed-mask-transform}).}
\label{fig:distributed-mask-transform}
\vskip -0.2cm
\end{figure}

\subsection{Hierarchical Sparsity-Aware Compute-Communication Balancing}
\label{sec:method-load-balancing}

Achieving efficient distributed attention requires jointly optimizing both compute and communication loads, especially under structured sparsity. While balancing per-rank compute loads is critical to avoid stragglers, disregarding the communication topology can easily lead to excessive inter-node key-value (KV) traffic and increased overall latency. However, exact joint optimization is intractable due to the combinatorial search space and sparsity-induced irregularities.

To address this challenge, we propose a \emph{Hierarchical Compute-Communication Iterative Pairwise Optimal (HCC-IPO)} scheduler, which alternates between: (i) inter-rank compute balancing (assigning chunks to ranks to minimize per-rank compute imbalance under strict cardinality constraints), (ii) inter-node communication reduction (iteratively swapping chunk assignments between nodes to minimize the maximum inter-node KV traffic while maintaining compute balance within an $\epsilon$-tolerance), and (iii) intra-node compute rebalancing (restoring compute load balance within each node, with node assignments fixed). Full algorithmic details and theoretical guarantees are provided in Appendix~\ref{app:proofs-hcc-ipo}.

Formally, let $N$ be the total number of chunks, $P$ the number of ranks, and $\kappa = N / P$ the per-rank chunk cardinality. For each chunk $i$, let $w_i$ denote its compute load(see Appendix~\ref{app:workload-computation}). A chunk-to-rank assignment is a mapping $\pi: [N] \to [P]$, where each rank $j$ is assigned chunk set $S_j(\pi) = \pi^{-1}(j)$ with total compute load $L_j(\pi) = \sum_{i \in S_j(\pi)} w_i$. The cardinality-constrained min-max partitioning problem is:
\begin{equation}
    \min_{\pi}\;\max_{j\in[P]}\,L_j(\pi), \qquad\text{s.t.}\quad |S_j(\pi)|=\kappa\;\;\forall j ,
    \label{eq:rank-balance}
\end{equation}
A standard lower bound is $\mathit{LB} = \max\left(\left\lceil \frac{1}{P}\sum_{i} w_i \right\rceil, \max_{i} w_i\right)$. We employ the IPO method for this partitioning (Appendix~\ref{app:proofs-hcc-ipo}).

For communication balancing, we group ranks by node: let $\mathcal{R}_m$ denote the ranks on node $m$, and $T_m = \bigcup_{j \in \mathcal{R}_m} S_j$ the chunk set assigned to node $m$. Let $K_i$ denote the set of KV blocks accessed by chunk $i$, and $\mathrm{Local}(m)$ the set of KV blocks whose queries reside on node $m$. The inter-node KV traffic and the global communication bottleneck are defined as
\begin{equation}
    \mathrm{Comm}(m,\pi) \;=\; \left|\,{\textstyle\bigcup_{i\in T_m}}\,K_i \,\setminus\, \mathrm{Local}(m)\,\right|, \qquad
    \mathrm{Comm}_{\max}(\pi) \;=\; \max_m\,\mathrm{Comm}(m,\pi),
    \label{eq:comm}
\end{equation}
To reduce $\mathrm{Comm}_{\max}$, we identify $m_{\max}$ and $m_{\min}$ by their communication loads, enumerate feasible one-for-one chunk swaps $(i_a \in T_{m_{\max}}, i_b \in T_{m_{\min}})$ that preserve compute balance within $\epsilon$-tolerance (i.e., $|L_j(\pi') - \bar L| \le \epsilon \bar L$ for all affected ranks, with $\bar L = \tfrac{1}{P}\sum_i w_i$), and commit the swap that yields the largest reduction in $\mathrm{Comm}_{\max}$. After communication balancing, intra-node compute balance is restored by applying IPO within each node.

The resulting assignment $\pi$ achieves balanced compute loads, bounds the worst-case inter-node KV traffic, and is fully compatible with the distributed mask transformation described in Section~\ref{sec:method-distributed-mask}. This hierarchical, sparsity-aware balancing enables scalable and efficient mask-aware distributed attention under realistic hardware constraints.

\subsection{Mask-aware hierarchical compute-communication overlap}
\label{sec:method-overlap}

CP introduces overlap in two aspects: (i) the all-gather (AG) of $K,V$ in the forward pass (and backward recomputation), and (ii) the reduce-scatter (RS) of $\mathrm{d}K,\mathrm{d}V$ in the backward pass. FlashMask-3 hides both costs behind sparse attention computation via three mechanisms: (1)~a sparse-mask-aware NVSHMEM symmetric substrate; (2)~a hierarchical (intra-/inter-node) AG scheme \emph{decoupled} from the attention kernel; and (3)~a multi-stage backward pipeline interleaving AG, attention, and RS. Communication kernels occupy only 4--8 SMs on high-priority non-blocking streams. The overlap requires fewer than 20 lines of minimally invasive
 modification to the attention kernel. Detailed protocols are in Appendix~\ref{app:proofs-overlap-fwd} and~\ref{app:proofs-overlap-bwd}; Figure~\ref{fig:hierarchical-overlap} provides a timeline summary.

\paragraph{Sparse-mask-aware NVSHMEM symmetric substrate.}
A single \emph{Send/Recv buffer} (SR buffer), sized for the complete post-AG $K,V$ tensors, is allocated in NVSHMEM symmetric memory. Any device can access any remote SR-buffer slot via one-sided RDMA \texttt{get}/\texttt{put}, and the attention kernel reads $K,V$ directly from this buffer without intermediate copies. The only unavoidable data movement is a one-time copy of the local $K,V$ shard into the SR buffer; the same buffer is reused for backward recomputation at no extra cost. Before each communication launch, a lightweight sparsity-check kernel scans the column-partitioned sparse boundaries (\S\ref{sec:method-distributed-mask}) to produce a binary \texttt{copy\_chunk\_mask} marking entirely masked-out KV chunks. The communication kernel skips these chunks, eliminating ${\approx}50\%$ of AG/RS traffic in typical causal document and prefix-LM document masks in our experiments.

\paragraph{Decoupled hierarchical all-gather.}
The AG kernel runs on a high-priority stream while the attention kernel performs a single pass over the gathered $K,V$, without partitioning $K,V$ across communication steps, thereby avoiding the inter-partition online-softmax LSE correction overhead. The two kernels synchronize via a monotonically increasing \emph{write-pointer frontier}; the KV Loader (\S\ref{sec:method-kernel}) polls this pointer before each TMA load. To exploit the bandwidth asymmetry between RoCE and NVLink ($5$--$10\times$ higher per-GPU), the AG is routed via \emph{congruence groups}:
\begin{equation}
\mathcal{C}(r) \;=\; \bigl\{\,r\bmod G + k\cdot G \;\big|\; k=0,\dots,N{-}1\,\bigr\},
\label{eq:congruence}
\end{equation}
where $r$ is the global device rank, $N$ the number of nodes, and $G{=}8$ GPUs per node. In \textbf{Phase~1}, each device issues $N{-}1$ RoCE \texttt{get} operations to fetch KV chunks from cross-node congruence-group members. In \textbf{Phase~2}, each device retrieves the remaining $(G{-}1)\!\cdot\!N$ chunks from intra-node neighbors over NVLink, whose congruence-group data are already available. Letting $P=N\!\cdot\!G$, at $P{=}32$ inter-node pulls drop from $(N-1)\cdot G=24$ to $N{-}1=3$, a ${\approx}90\%$ reduction.

\paragraph{Multi-stage backward overlap.}
The backward pass reuses the same hierarchical AG for $K,V$ recomputation, adopting KV column parallelism: each CTA processes one KV chunk and iterates over all $Q$ blocks, eliminating sequential dependencies across chunks. The contiguous \emph{write-pointer frontier} is replaced by per-chunk \texttt{work\_done} flags. The RS is realized as a sparse all-to-all via NVSHMEM one-sided \texttt{put}, synchronized by a bitmask-AMO mechanism for \texttt{put} producers and gradient reduction consumers, with both SM-level and rank-level load balanced. We partition the KV sequence into $T$ segments of $\mathcal{C}=2^{\lfloor\frac{1}{2}\log_2 P\rfloor}$ chunks each (e.g., $T{=}\mathcal{C}{=}4$ for $P{=}16$), so the RS of segment $t$ overlaps with the AG and backward attention of segment $t{+}1$. Since forward-saved LSE eliminates cross-segment online-softmax rescaling, this partition incurs virtually no overhead beyond fp32 $\mathrm{d}Q$ accumulation across segments.

\begin{figure}[t]
\centering
\includegraphics[width=\textwidth]{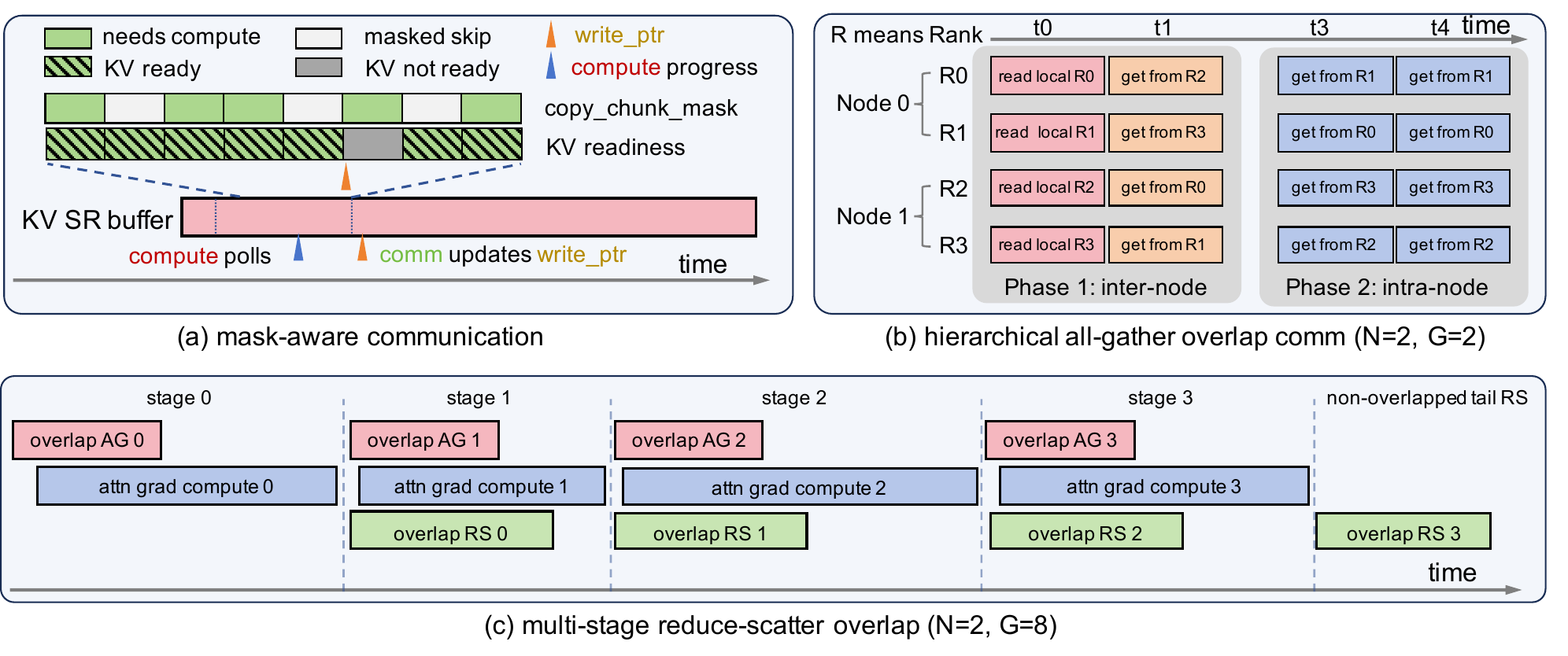}
\vskip -0.3cm
\caption{Mask-aware hierarchical compute-communication overlap. \textbf{(a)}~AG skips fully-masked chunks; compute polls a \texttt{write\_ptr} updated by the comm kernel. \textbf{(b)}~Hierarchical AG ($N{=}2,G{=}2$): inter-node congruence-group fetch (Phase~1), then intra-node NVLink fetch (Phase~2). \textbf{(c)}~Multi-stage backward overlap ($N{=}2,G{=}8$): RS of segment $t$ overlaps AG and grad compute of segment $t{+}1$; the tail RS runs non-overlapped.}
\label{fig:hierarchical-overlap}
\end{figure}

\section{Experiments}

In this section, we evaluate FlashMask-3 through a comprehensive set of experiments designed to assess single-GPU kernel efficiency (\S\ref{sec:single-gpu}), distributed scalability (\S\ref{sec:exp-cp-scalability}), and end-to-end training throughput on a production-scale MoE workload (\S\ref{sec:exp-e2e}), followed by component-level ablations (\S\ref{sec:exp-ablation}).
All experiments are conducted on machines equipped with NVIDIA H100 80GB SXM5 GPUs interconnected via NVLink within each node and RoCE across nodes, with 8 GPUs per node and Intel Xeon Platinum 8468V CPUs (48 cores).
We use CUDA 12.4, CUTLASS (commit \texttt{afa177})~\citep{nvidia2023cutlass} and NVSHMEM 3.2.5 throughout.
Baseline configurations are as follows: MagiAttention (commit \texttt{d97b72}), FlexAttention (PyTorch 2.8.0), and Megatron 2.8.0.
Unless otherwise stated, all experiments use a head dimension of $d = 128$ and the BF16 data type. Detailed experimental configurations are provided in Appendix~\ref{app:exp-single-setup}.
Speed measurements are repeated; error bars in the figures and ± values in the tables denote ±1 standard deviation.

\begin{figure}[t]
\centering
\includegraphics[width=\textwidth]{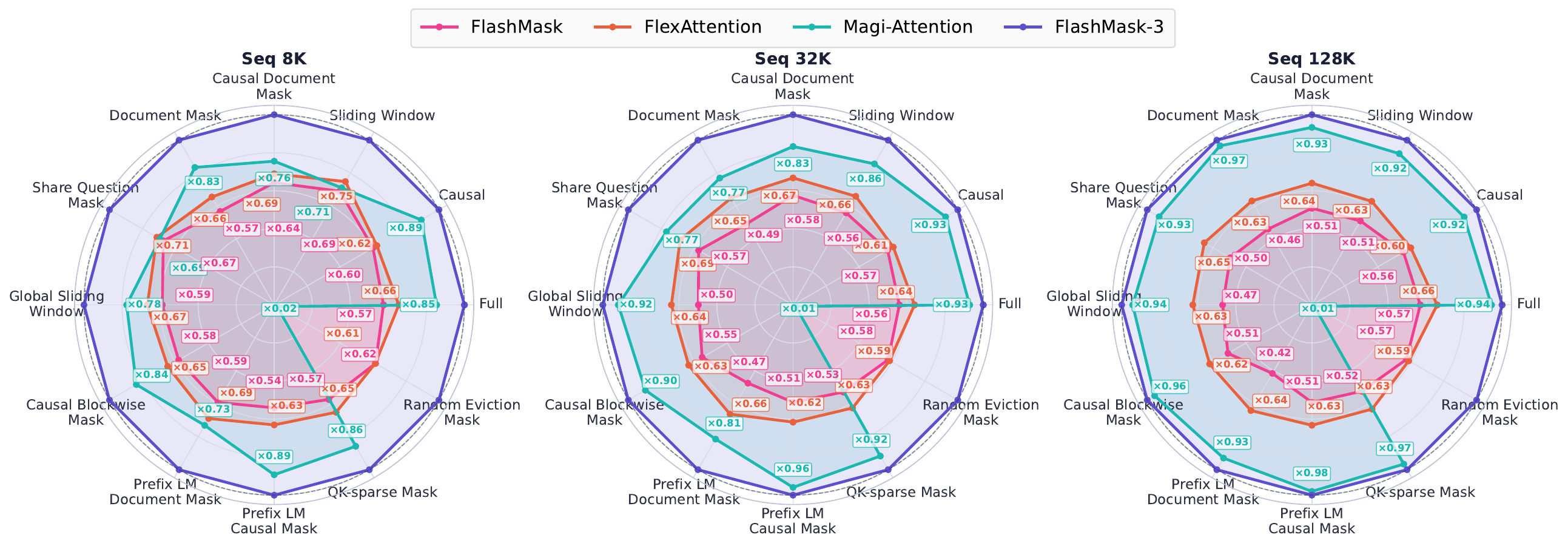}
\vskip -0.2cm
\caption{Kernel FWD \& BWD throughput (TFLOPs/s) on a single H100 GPU. Error bars are omitted for legibility; the underlying per-mask measurements with standard deviations are reported in Figure~\ref{fig:single-card} (Appendix~\ref{app:exp-single-results}).}
\label{fig:single-card-perf}
\vskip -0.4cm
\end{figure}

\subsection{Single-GPU Kernel Performance}
\label{sec:single-gpu}

We first isolate the kernel-level performance of FlashMask-3 by benchmarking on a single H100 GPU against FlashMask, FlexAttention, and MagiAttention. We sweep sequence lengths from 4K to 128K across 12 representative sparse mask patterns (see Equation~\ref{eq:speedup} and Figure~\ref{fig:single-card-perf}). The data construction method and experiment settings follows that described in Section A.5.2 \citep{wang2025flashmask}. Across this entire grid, FlashMask-3 improves achieved TFLOPs/s over FlashMask, FlexAttention, and MagiAttention by 45\%--141\%, 34\%--71\%, and 2\%--44\%, respectively, outperforming all three baselines on every mask pattern and sequence length. To summarize these results into one figure per baseline, we report the geometric mean of speedups across all mask patterns and sequence lengths.
\begin{equation}
\label{eq:speedup}
    \text{Speedup} = \frac{V_{\text{TFLOPs/s},\,\text{FlashMask-3}} - V_{\text{TFLOPs/s},\,\text{baseline}}}{V_{\text{TFLOPs/s},\,\text{baseline}}} \times 100\%.
\end{equation}
\vskip -0.2cm

The \textbf{82.6\%} mean speedup over FlashMask validates the combined effect of our three-role warp-specialized pipeline, asynchronous TMA prefetching, Ping-Pong scheduling, and the PPT scheduler on Hopper. Against FlexAttention, the \textbf{55.3\%} mean gain reflects the overhead of runtime mask evaluation via \texttt{torch.compile} and the absence of a Hopper-native execution pipeline. FlashMask-3 also surpasses MagiAttention by \textbf{15.0\%} in geometric mean (excluding the Random Eviction mask, on which MagiAttention's block-sparse representation is ill-suited, resulting in over $50\times$ slowdown). Even on patterns favorable to block-sparse encoding, FlashMask-3 remains competitive or superior, demonstrating that the column-wise sparse representation achieves broad expressiveness without sacrificing efficiency. Across all comparisons, the speedup increases monotonically with sequence length (e.g., $+$66.2\% $\to$ $+$97.3\% over FlashMask from 8K to 128K), confirming that our architectural optimizations yield compounding benefits as attention becomes increasingly memory-bound. Detailed per-mask TFLOPs/s results are provided in Appendix~\ref{app:exp-single-results}.

\begin{figure}[t]
\centering
\includegraphics[width=\textwidth]{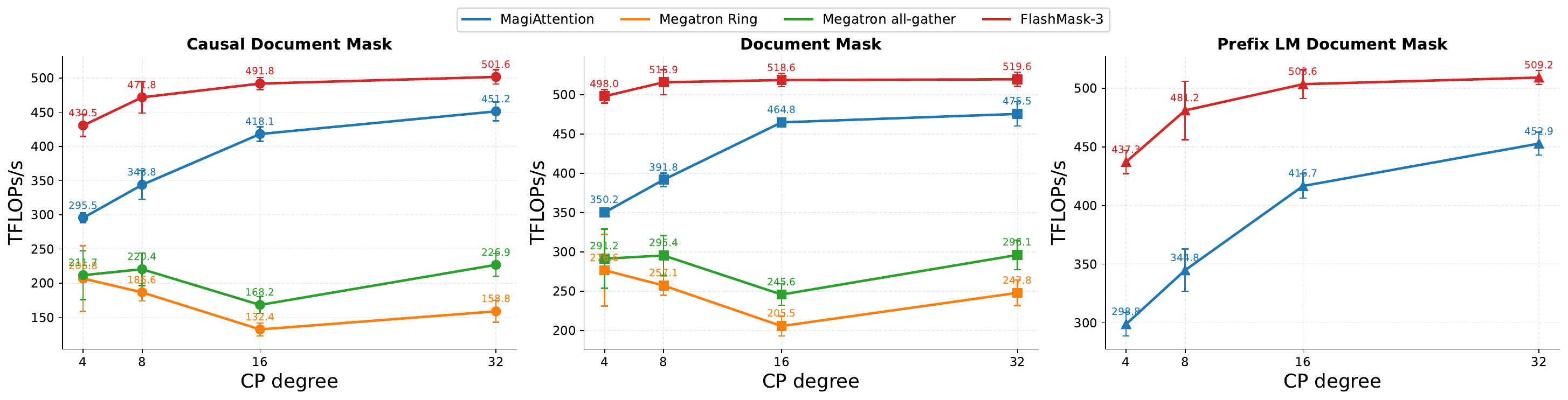}

\caption{Distributed scaling. Per-GPU TFLOPs/s versus CP degree $P$ for FlashMask-3, Megatron all-gather, Megatron Ring Attention, and Magi-Attention on three practically relevant mask patterns.}
\label{fig:dist-scale}
\vskip -0.2cm
\end{figure}

\subsection{Context Parallelism Scalability}
\label{sec:exp-cp-scalability}

To evaluate the scalability of FlashMask-3 under context parallelism (CP), we compare against two representative baselines, namely Megatron-Core CP (both its \emph{ring} and \emph{all-gather} variants) and MagiAttention, on three practically relevant mask patterns: \emph{Causal Document Mask}, \emph{Document Mask}, and \emph{Prefix LM Document Mask}. Since neither the ring nor the all-gather variant of Megatron-Core CP supports Prefix LM Document Mask, we exclude them from the corresponding comparison.

We adopt a weak-scaling protocol in which the per-device local sequence length is fixed at $S_{\text{local}}{=}8$K tokens and the CP degree is scaled from 4 to 32, yielding total sequence lengths ranging from 32K to 256K. All experiments use a batch size of $B{=}32/P$, $H_q{=}64$ query heads, and $H_{kv}{=}1$ key-value head; since $S{=}S_{\text{local}}\!\cdot\!P$ grows with $P$ while $B$ shrinks as $1/P$, the dense per-device attention workload $B S^2 / P$ is constant across CP degrees. The results are summarized in Figure~\ref{fig:dist-scale}. Additional configurations and complete forward/backward results are provided in Appendix~\ref{app:exp-dist-setup} and~\ref{app:exp-cp-comparison}.

At CP degrees 4 and 8, all communication is confined to intra-node NVLink interconnects, whereas scaling to CP 16 and 32 introduces inter-node traffic. Despite this transition, FlashMask-3 sustains strong scaling efficiency. On the Causal Document Mask, its throughput increases from 430.5\,TFLOPs/s at CP\,=\,4 to 471.8, 491.8, and 501.6\,TFLOPs/s at CP\,=\,8, 16, and 32, respectively. Across all supported mask patterns and CP degrees, FlashMask-3 achieves $1.09\times$ to $1.46\times$ higher throughput than MagiAttention, outperforms Megatron-Core CP (all-gather) by $1.71\times$ to $2.92\times$, and surpasses Megatron-Core CP (ring) by $1.80\times$ to $3.71\times$.

\subsection{End-to-End Training Throughput}
\label{sec:exp-e2e}

The preceding sections measure attention in isolation; we now report throughput for full training steps. We train a 118B-A14B MoE model at a 128K context on 128 H100 GPUs (16 nodes) with CP degree 16, a per-device local sequence length of $S_{\text{local}}{=}8$K tokens, $H_q{=}64$ query heads, $H_{kv}{=}8$ KV heads, and a document mask. The baseline is FlashMask-3 with Dual Chunk balancing and all-gather communication, i.e.\ with the balancer of \S\ref{sec:method-load-balancing} and the overlap of \S\ref{sec:method-overlap} disabled. Because the share of step time spent in attention depends on mask sparsity, we repeat the comparison on three datasets with average mask sparsity 70\%, 80\%, and 86\%.

\begin{table}[!tb]
\centering
\small
\setlength{\tabcolsep}{4pt}
\caption{End-to-end training throughput of a 118B-A14B MoE model (128K context, CP degree 16, $S_{\text{local}}{=}8$K tokens per device, $H_q{=}64$, $H_{kv}{=}8$, document mask) on 128 H100 GPUs, for three datasets of differing average mask sparsity. Each run covers 70 steps; we discard the 10 highest and 10 lowest throughput steps and report the mean ± one standard deviation over the remaining 50. $\Delta_{\text{attn}}$ is the corresponding attention forward+backward speedup over the baseline.}
\label{tab:e2e}
\resizebox{\linewidth}{!}{%
\begin{tabular}{@{}lcccccc@{}}
\toprule
& \multicolumn{2}{c}{70\% sparsity} & \multicolumn{2}{c}{80\% sparsity} & \multicolumn{2}{c}{86\% sparsity} \\
\cmidrule(lr){2-3}\cmidrule(lr){4-5}\cmidrule(lr){6-7}
Configuration & tokens/s/GPU & $\Delta_{\text{attn}}$ & tokens/s/GPU & $\Delta_{\text{attn}}$ & tokens/s/GPU & $\Delta_{\text{attn}}$ \\
\midrule
Baseline (Dual Chunk + all-gather)   & 712.6 ± 8.7   & --      & 753.3 ± 34.7 & --      & 812.4 ± 57.1 & --      \\
+ HCC-IPO balancing                  & 732.2 ± 24.5  & +10.4\% & 786.2 ± 65.2 & +19.1\% & 858.5 ± 62.1 & +10.5\% \\
+ mask-aware overlap                 & 750.8 ± 6.4   & +15.9\% & 785.0 ± 33.6 & +13.9\% & 860.0 ± 55.0 & +20.5\% \\
+ both, hierarchical routing off     & 700.6 ± 28.8  & +8.4\%  & 742.0 ± 72.6 & +21.5\% & 828.0 ± 60.6 & +23.2\% \\
+ both, w/o chunk-skip pruning       & 750.0 ± 20.6  & +20.5\% & 788.4 ± 55.0 & +25.4\% & 888.6 ± 69.6 & +27.9\% \\
\textbf{FlashMask-3 (balance + overlap)} & \textbf{764.7 ± 28.8} & \textbf{+24.5\%} & \textbf{811.4 ± 75.9} & \textbf{+42.1\%} & \textbf{902.1 ± 72.6} & \textbf{+44.9\%} \\
\bottomrule
\end{tabular}}
\vskip -0.1cm
\end{table}

Table~\ref{tab:e2e} reports gains of 7.3\% / 7.7\% / 11.0\% in end-to-end throughput with both components enabled, at 70\% / 80\% / 86\% sparsity, against attention forward+backward speedups of 24.5\% / 42.1\% / 44.9\% (at 70\% sparsity, 10900.5\,ms per step down to 8757.6\,ms). The end-to-end gain is smaller because attention accounts for only 43.5\% / 25.8\% / 24.4\% of forward+backward compute time in the full configuration, and the end-to-end measurement also covers gradient synchronization and the optimizer update, which our method does not modify.

Each component helps on its own, contributing 2.8\% / 4.4\% / 5.7\% (balancing) and 5.4\% / 4.2\% / 5.9\% (overlap), and chunk-skip pruning is worth a further 2.0\% / 2.9\% / 1.5\%. Hierarchical routing is required for the two to compose at this cross-node scale: with it disabled, balancing plus overlap falls to $-$1.7\% / $-$1.5\% / $+$1.9\%, below overlap alone, because the balancer spreads chunks across nodes and that traffic is no longer confined to congruence groups (\S\ref{sec:method-overlap}). The default of 4 backward pipeline stages follows the rule of \S\ref{sec:method-overlap} at $P{=}16$: halving it to 2 costs up to 3.6\%, while raising it to 8 yields no significant further gain.

FlashMask-3 computes exact attention: mask localization matches the global mask (Theorem~\ref{thm:distributed-mask-correctness}), and the balancer, overlap, and pruning change only how work is placed, not what is computed. Convergence and downstream quality are therefore unaffected; Appendix~\ref{app:convergence} compares the loss of an otherwise identical run without context parallelism.

\subsection{Ablation Study}
\label{sec:exp-ablation}

Complementing the end-to-end breakdown of \S\ref{sec:exp-e2e}, we conduct three groups of ablation experiments that isolate each key component proposed in FlashMask-3 across mask patterns, sequence lengths, and CP degrees: \emph{Preemptive Persistent Tile} (PPT), \emph{load balancing}, and \emph{computation-communication overlap}.

\begin{figure}[tb]
\centering
\vskip -0.4cm
\subfigure[]{
\includegraphics[width=0.45\textwidth]{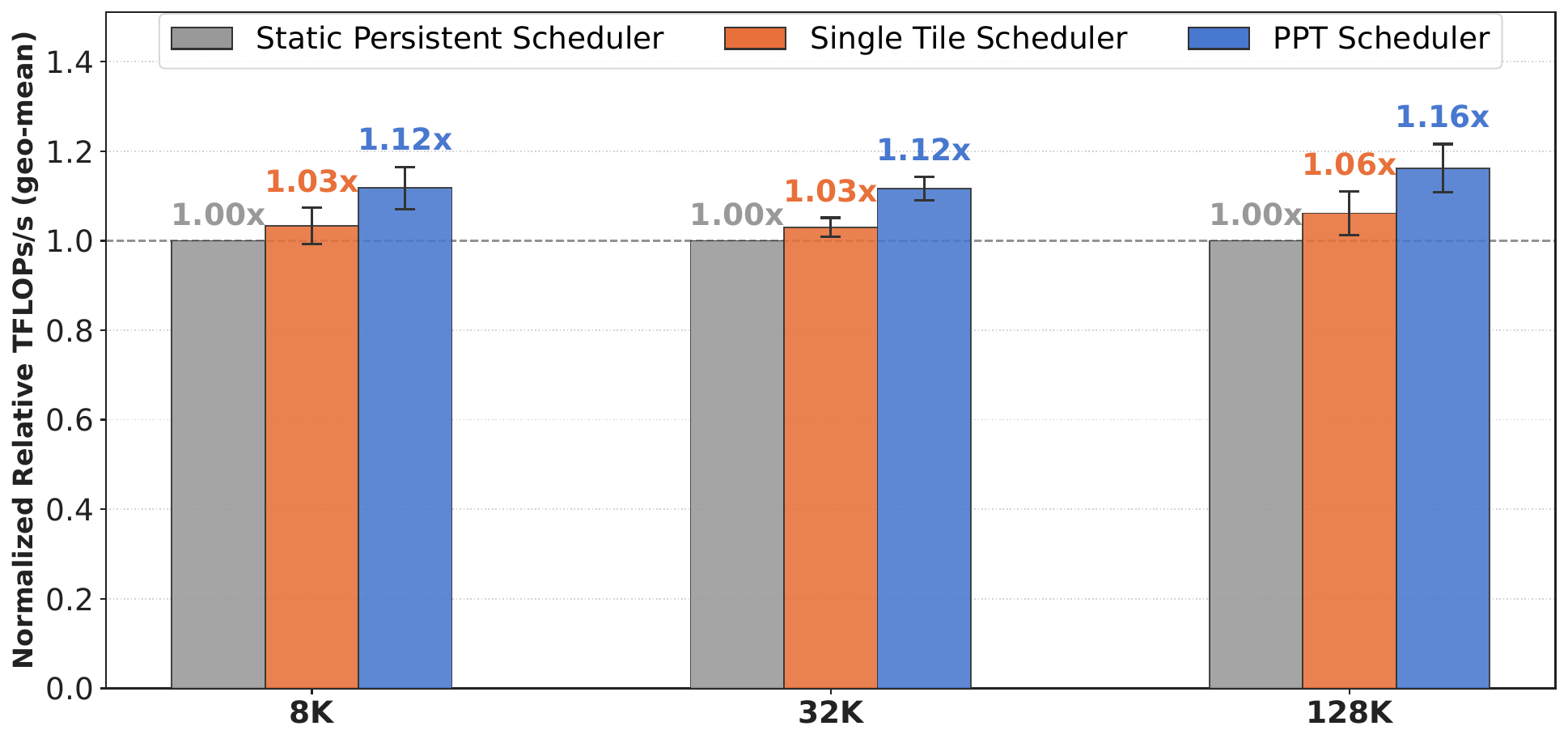}
}
\subfigure[]{
\includegraphics[width=0.45\textwidth]{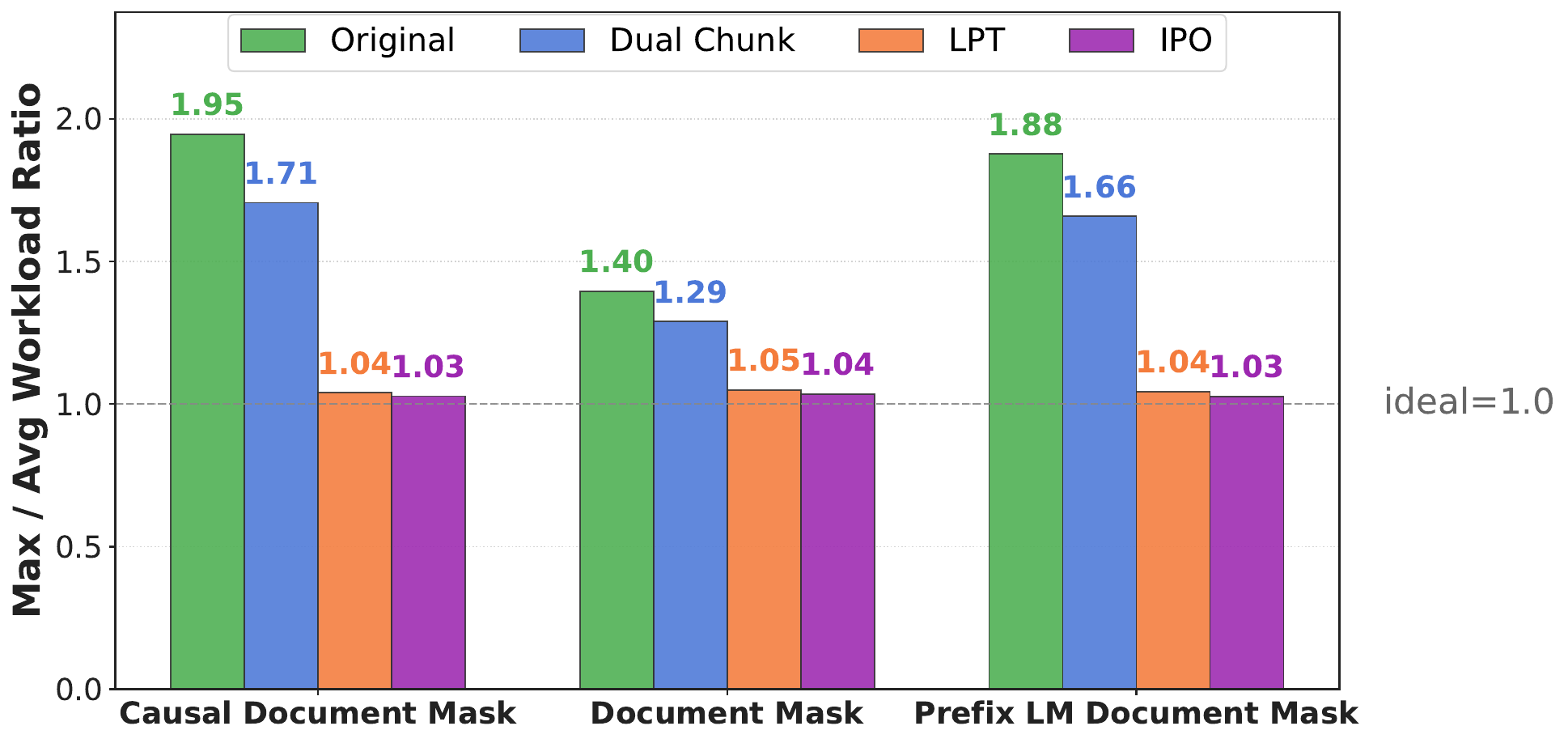}
}
\vskip -0.4cm
\caption{(a) Speedup of the Preemptive Persistent Tile (PPT) scheduler over the static scheduler across different mask patterns and sequence lengths on a single H100 GPU. (b) Peak-to-average ratio of per-device computation time under different workload partitioning strategies with Context Parallelism.}
\label{fig:ppt-and-peak-to-avg}
\vskip -0.4cm
\end{figure}

\paragraph{Effect of Preemptive Persistent Tile (PPT).}
We compare three tile-scheduling strategies on a single H100 GPU: (i) a \emph{static scheduler} that assigns a fixed tile grid, (ii) a \emph{single-tile scheduler} that dispatches one work tile at a time, and (iii) the proposed \emph{PPT} scheduler. As shown in Figure~\ref{fig:ppt-and-peak-to-avg}(a), across 12 mask configurations at sequence lengths of 8K, 32K, and 128K, PPT consistently outperforms the static scheduler by up to $\mathrm{12}\%$, demonstrating that preemptive, persistent tile execution effectively amortizes scheduling overhead and improves streaming multiprocessor (SM) utilization.

\paragraph{Effect of load balancing.}
We compare four workload partitioning strategies under CP: (i) a na\"ive \emph{baseline} that evenly splits the KV sequence, (ii) \emph{Dual Chunk}, (iii) \emph{Longest-Processing-Time first} (LPT)~\citep{graham1969bounds}, and (iv) our proposed \emph{Iterative Pairwise Optimization} (IPO). We report the \emph{peak-to-average ratio} of per-device computation time, where a value closer to 1.0 indicates a more balanced distribution. As illustrated in Figure~\ref{fig:ppt-and-peak-to-avg}(b), IPO achieves the lowest peak-to-average ratio across all three mask patterns, confirming that it further refines the LPT solution toward a more balanced workload assignment. A
  detailed visualization of per-rank compute load distributions under each strategy is provided in Figure~\ref{fig:overlap-ipo-balance} (Appendix).

\begin{figure}[tb]
\centering
\includegraphics[width=\textwidth]{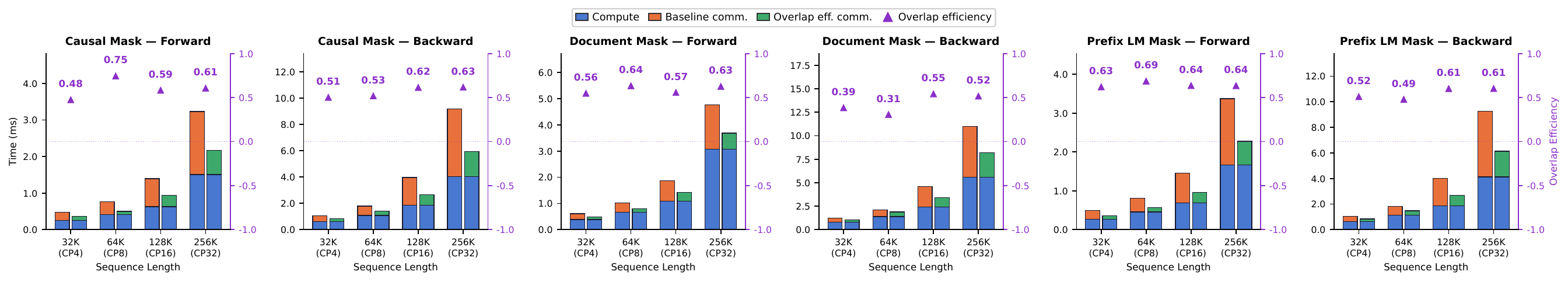}
\vskip -0.4cm
\caption{Computation-communication overlap efficiency for various mask patterns and CP degrees.}
\label{fig:overlap-combined-balance}
\vskip -0.4cm
\end{figure}

\paragraph{Effect of computation-communication overlap.}
Following the methodology of Flux~\citep{chang2024flux}, we report the overlap efficiency defined as $E_{\text{overlap}} = 1 - T_{\text{exposed}} / T_{\text{comm}}$, where $T_{\text{exposed}}$ denotes the portion of communication time that is not hidden behind computation and $T_{\text{comm}}$ is the total communication time. A value closer to 1.0 indicates that computation more fully hides communication latency. As shown in Figure~\ref{fig:overlap-combined-balance}, FlashMask-3 attains an overlap efficiency ranging from 0.31 to 0.75, validating the effectiveness of our mask-aware hierarchical compute-communication.

\section{Conclusion}
\label{sec:conclusion}

FlashMask-3 establishes that the column-wise sparse mask representation, originally a single-GPU memory optimization, also serves as the right interface for distributing expressive masks at zero communication on Hopper-class hardware. The method preserves one column-wise sparse mask abstraction and one FA3-derived kernel across single-GPU and multi-node settings, with mask-awareness propagated through Hopper kernel scheduling, hierarchical load balancing, and compute-communication overlap. Empirical evaluation across 12 structured masks at context lengths up to 128K on a single H100 GPU and context parallelism up to $P{=}32$ yields consistent improvements in single-GPU TFLOPs, compute-communication overlap, and distributed throughput over representative single-GPU and distributed mask baselines. On end-to-end training of a 118B-A14B MoE model at a 128K context across 128 H100 GPUs, these mechanisms yield a 7.3\%--11.0\% throughput gain.

\clearpage

\section*{Acknowledgments}

We thank the anonymous reviewers, area chairs, and senior area chairs for their insightful comments and constructive suggestions.

This work was supported by the National Engineering Research Center of Deep Learning Technology and Application.

\bibliographystyle{plainnat}
\bibliography{references}

@inproceedings{megatron-lm,
  author       = {Deepak Narayanan and
                  Mohammad Shoeybi and
                  Jared Casper and
                  Patrick LeGresley and
                  Mostofa Patwary and
                  Vijay Korthikanti and
                  Dmitri Vainbrand and
                  Prethvi Kashinkunti and
                  Julie Bernauer and
                  Bryan Catanzaro and
                  Amar Phanishayee and
                  Matei Zaharia},
  editor       = {Bronis R. de Supinski and
                  Mary W. Hall and
                  Todd Gamblin},
  title        = {Efficient large-scale language model training on {GPU} clusters using
                  megatron-LM},
  booktitle    = {International Conference for High Performance Computing, Networking,
                  Storage and Analysis, {SC} 2021, St. Louis, Missouri, USA, November
                  14-19, 2021},
  pages        = {58},
  publisher    = {{ACM}},
  year         = {2021},
  url          = {https://doi.org/10.1145/3458817.3476209},
  doi          = {10.1145/3458817.3476209},
  bibsource    = {dblp computer science bibliography, https://dblp.org}
}

@article{milakov2018online,
  title={Online normalizer calculation for softmax},
  author={Milakov, Maxim and Gimelshein, Natalia},
  journal={arXiv preprint arXiv:1805.02867},
  year={2018}
}

@article{rabe2021selfattention,
  title         = {Self-attention Does Not Need {$O(n^2)$} Memory},
  author        = {Rabe, Markus N. and Staats, Charles},
  journal       = {arXiv preprint arXiv:2112.05682},
  year          = {2021},
  eprint        = {2112.05682},
  archivePrefix = {arXiv},
  primaryClass  = {cs.LG}
}

@article{dao2022flashattention,
  title={Flashattention: Fast and memory-efficient exact attention with io-awareness},
  author={Dao, Tri and Fu, Dan and Ermon, Stefano and Rudra, Atri and R{\'e}, Christopher},
  journal={Advances in neural information processing systems},
  volume={35},
  pages={16344--16359},
  year={2022}
}

@article{dao2023flashattention2,
  title={Flashattention-2: Faster attention with better parallelism and work partitioning},
  author={Dao, Tri},
  journal={arXiv preprint arXiv:2307.08691},
  year={2023}
}

@article{shah2024flashattention3,
  title={Flashattention-3: Fast and accurate attention with asynchrony and low-precision},
  author={Shah, Jay and Bikshandi, Ganesh and Zhang, Ying and Thakkar, Vijay and Ramani, Pradeep and Dao, Tri},
  journal={Advances in Neural Information Processing Systems},
  volume={37},
  pages={68658--68685},
  year={2024}
}

@article{wang2025flashmask,
  title={Flashmask: Efficient and rich mask extension of flashattention},
  author={Wang, Guoxia and Zeng, Jinle and Xiao, Xiyuan and Wu, Siming and Yang, Jiabin and Zheng, Lujing and Chen, Zeyu and Bian, Jiang and Yu, Dianhai and Wang, Haifeng},
  journal={arXiv preprint arXiv:2410.01359},
  year={2024}
}

@article{he2024flexattention,
  title={Flexattention: The flexibility of pytorch with the performance of flashattention},
  author={He, Horace and Guessous, Driss and Liang, Yanbo and Dong, Joy},
  journal={PyTorch Blog},
  volume={8},
  year={2024}
}

@inproceedings{tillet2019triton,
  title={Triton: an intermediate language and compiler for tiled neural network computations},
  author={Tillet, Philippe and Kung, Hsiang-Tsung and Cox, David},
  booktitle={Proceedings of the 3rd ACM SIGPLAN International Workshop on Machine Learning and Programming Languages},
  pages={10--19},
  year={2019}
}

@article{liu2023ringattention,
  title         = {Ring Attention with Blockwise Transformers for Near-Infinite Context},
  author        = {Liu, Hao and Zaharia, Matei and Abbeel, Pieter},
  journal       = {arXiv preprint arXiv:2310.01889},
  year          = {2023},
  eprint        = {2310.01889},
  archivePrefix = {arXiv},
  primaryClass  = {cs.CL}
}

@article{brandon2023stripedattention,
  title         = {Striped Attention: Faster Ring Attention for Causal Transformers},
  author        = {Brandon, William and Nrusimha, Aniruddha and Qian, Kevin and Ankner, Zachary and Jin, Tian and Song, Zhiye and Ragan-Kelley, Jonathan},
  journal       = {arXiv preprint arXiv:2311.09431},
  year          = {2023},
  eprint        = {2311.09431},
  archivePrefix = {arXiv},
  primaryClass  = {cs.LG}
}

@article{jacobs2023deepspeedulysses,
  title         = {DeepSpeed Ulysses: System Optimizations for Enabling Training of Extreme Long Sequence Transformer Models},
  author        = {Jacobs, Sam Ade and Tanaka, Masahiro and Zhang, Chengming and Zhang, Minjia and Song, Shuaiwen Leon and Rajbhandari, Samyam and He, Yuxiong},
  journal       = {arXiv preprint arXiv:2309.14509},
  year          = {2023},
  eprint        = {2309.14509},
  archivePrefix = {arXiv},
  primaryClass  = {cs.LG}
}

@article{li2023distflashattn,
  title         = {DistFlashAttn: Distributed Memory-efficient Attention for Long-context LLMs Training},
  author        = {Li, Dacheng and Shao, Rulin and Xie, Anze and Sheng, Ying and Zheng, Lianmin and Gonzalez, Joseph E. and Stoica, Ion and Ma, Xuezhe and Zhang, Hao},
  journal       = {arXiv preprint arXiv:2310.03294},
  year          = {2023},
  eprint        = {2310.03294},
  archivePrefix = {arXiv},
  primaryClass  = {cs.LG}
}

@article{fang2024usp,
  title={Usp: A unified sequence parallelism approach for long context generative ai},
  author={Fang, Jiarui and Zhao, Shangchun},
  journal={arXiv preprint arXiv:2405.07719},
  year={2024}
}

@misc{tao2025magiattention,
  title        = {MagiAttention: A Distributed Attention Towards Linear Scalability for Ultra-Long Context, Heterogeneous Mask Training},
  author       = {Tao, Zewei and Huang, Yunpeng},
  howpublished = {\url{https://github.com/SandAI-org/MagiAttention}},
  year         = {2025}
}

@inproceedings{jiang2025dcp,
  title     = {DCP: Addressing Input Dynamism In Long-Context Training via Dynamic Context Parallelism},
  author    = {Jiang, Chenyu and Cai, Zhenkun and Tian, Ye and Jia, Zhen and Wang, Yida and Wu, Chuan},
  booktitle = {Proceedings of the ACM SIGOPS 31st Symposium on Operating Systems Principles},
  pages     = {221--236},
  year      = {2025},
  eprint    = {2510.10620},
  archivePrefix = {arXiv},
  primaryClass  = {cs.DC}
}

@article{chang2024flux,
  title         = {FLUX: Fast Software-based Communication Overlap On GPUs Through Kernel Fusion},
  author        = {Chang, Li-Wen and Bao, Wenlei and Hou, Qi and Jiang, Chengquan and Zheng, Ningxin and Zhong, Yinmin and Zhang, Xuan and Song, Zuquan and Jiang, Ziheng and Lin, Haibin and Jin, Xin and Liu, Xin},
  journal       = {arXiv preprint arXiv:2406.06858},
  year          = {2024},
  eprint        = {2406.06858},
  archivePrefix = {arXiv},
  primaryClass  = {cs.LG}
}

@misc{nvidia2023nvshmem,
  title        = {NVSHMEM: NVIDIA OpenSHMEM Library for NVIDIA GPUs},
  author       = {{NVIDIA}},
  howpublished = {Software documentation},
  year         = {2023},
  url          = {https://developer.nvidia.com/nvshmem}
}

@misc{nvidia2023cutlass,
author = {Thakkar, Vijay and Ramani, Pradeep and Cecka, Cris and Shivam, Aniket and Lu, Honghao and Yan, Ethan and Kosaian, Jack and Hoemmen, Mark and Wu, Haicheng and Kerr, Andrew and Nicely, Matt and Merrill, Duane and Blasig, Dustyn and Atluri, Aditya and Qiao, Fengqi and Majcher, Piotr and Springer, Paul and Hohnerbach, Markus and Wang, Jin and Gupta, Manish},
license = {BSD-3-Clause},
month = jan,
title = {{CUTLASS}},
url = {https://github.com/NVIDIA/cutlass},
version = {3.0.0},
year = {2023}
}

@misc{deepseek2025deepep,
      title={DeepEP: an efficient expert-parallel communication library},
      author={Chenggang Zhao and Shangyan Zhou and Liyue Zhang and Chengqi Deng and Zhean Xu and Yuxuan Liu and Kuai Yu and Jiashi Li and Liang Zhao},
      year={2025},
      publisher = {GitHub},
      howpublished = {\url{https://github.com/deepseek-ai/DeepEP}},
}

@inproceedings{child2019sparse,
  title={Generating long sequences with sparse transformers},
  author={Child, Rewon and Gray, Scott and Radford, Alec and Sutskever, Ilya},
  journal={arXiv preprint arXiv:1904.10509},
  year={2019}
}

@article{beltagy2020longformer,
  title         = {Longformer: The Long-Document Transformer},
  author        = {Beltagy, Iz and Peters, Matthew E. and Cohan, Arman},
  journal       = {arXiv preprint arXiv:2004.05150},
  year          = {2020},
  eprint        = {2004.05150},
  archivePrefix = {arXiv},
  primaryClass  = {cs.CL}
}

@inproceedings{yuan2025nsa,
  title={Native sparse attention: Hardware-aligned and natively trainable sparse attention},
  author={Yuan, Jingyang and Gao, Huazuo and Dai, Damai and Luo, Junyu and Zhao, Liang and Zhang, Zhengyan and Xie, Zhenda and Wei, Yuxing and Wang, Lean and Xiao, Zhiping and others},
  booktitle={Proceedings of the 63rd Annual Meeting of the Association for Computational Linguistics (Volume 1: Long Papers)},
  pages={23078--23097},
  year={2025}
}

@article{lu2025moba,
  title         = {MoBA: Mixture of Block Attention for Long-Context LLMs},
  author        = {Lu, Tianyu and others},
  journal       = {arXiv preprint arXiv:2502.13189},
  year          = {2025},
  eprint        = {2502.13189},
  archivePrefix = {arXiv},
  primaryClass  = {cs.LG}
}

@article{graham1969bounds,
  title   = {Bounds on Multiprocessing Timing Anomalies},
  author  = {Graham, Ronald L.},
  journal = {SIAM Journal on Applied Mathematics},
  volume  = {17},
  number  = {2},
  pages   = {416--429},
  year    = {1969}
}

@article{ernie2026technicalreport,
  title         = {ERNIE 5.0 Technical Report},
  author        = {Wang, Haifeng and Wu, Hua and Wu, Tian and Sun, Yu and Liu, Jing and Yu, Dianhai and Ma, Yanjun and others},
  journal       = {arXiv preprint arXiv:2602.04705},
  year          = {2026},
  eprint        = {2602.04705},
  archivePrefix = {arXiv},
  primaryClass  = {cs.CL}
}

@article{fan2025meshattention,
  title         = {Mesh-Attention: A New Communication-Efficient Distributed Attention with Improved Data Locality},
  author        = {Chen, Sirui and Chen, Jingji and Zhu, Siqi and Jiang, Ziheng and Peng, Yanghua and Qian, Xuehai},
  journal       = {arXiv preprint arXiv:2512.20968},
  year          = {2025},
  eprint        = {2512.20968},
  archivePrefix = {arXiv},
  primaryClass  = {cs.DC}
}

@article{deng2025bagel,
  title         = {Emerging Properties in Unified Multimodal Pretraining},
  author        = {Deng, Chaorui and Zhu, Deyao and Li, Kunchang and Gou, Chenhui and Li, Feng and Wang, Zeyu and Zhong, Shu and Yu, Weihao and Nie, Xiaonan and Song, Ziang and others},
  journal       = {arXiv preprint arXiv:2505.14683},
  year          = {2025},
  eprint        = {2505.14683},
  archivePrefix = {arXiv},
  primaryClass  = {cs.CV}
}

\appendix

\section{Limitations and Broader Impact}
\label{app:limitations}

\textbf{Limitations.} The method covers the column-wise sparse mask family of \citet{wang2025flashmask}; masks whose per-column masked rows do not decompose into two intervals require representation extensions. The Hopper kernel exploits TMA/WGMMA and assumes Hopper-class architectures; porting to other architectures requires re-targeting the warp specialization. The hierarchical overlap of \S\ref{sec:method-overlap} assumes per-node 8-GPU NVLink islands with same-rank IB topology; clusters with different topologies need a different inter-node mapping. Hierarchical routing is currently applied only to the all-gather operation; extending it to the backward reduce-scatter is left as future work. Empirical claims rely on production-scale long-context settings; smaller contexts may not amortize the scheduler overhead. Finally, our benchmarks cover a representative but not exhaustive set of QKV tensor configurations; a systematic sweep over batch sizes and query/key-value head ratios beyond those reported in \S\ref{sec:exp-cp-scalability} and \S\ref{sec:exp-e2e} is left to future work.

\textbf{Broader impact.} By improving the throughput and energy efficiency of long-context training, FlashMask-3 lowers the cost barrier for academic and open-source long-context multimodal research and reduces the carbon footprint of large-scale training. The same efficiency gains can also lower the cost of training capability-amplifying models, which is an externality common to research on large-scale training infrastructure.

\section{Extended Related Work}
\label{app:extended-related-work}

\paragraph{IO-aware exact attention.}
FlashAttention~\citep{dao2022flashattention} introduced tiling and online softmax~\citep{milakov2018online,rabe2021selfattention} to compute exact attention in $O(N)$ memory without materializing the $N{\times}N$ score matrix.  FlashAttention-2~\citep{dao2023flashattention2} swapped the loop order and optimized warp-level work partitioning, reaching 50--73\% of A100 peak throughput.  FlashAttention-3~\citep{shah2024flashattention3} further added Hopper-native TMA/WGMMA pipelining and FP8 support, achieving 740\,TFLOPs/s on H100.  Block-Sparse FlashAttention~\citep{dao2022flashattention} augments the tiled kernel with an $O(N^2\!/B^2)$ binary block mask for tile skipping; FlashMask's column-wise sparse representation achieves $O(N)$ storage with token-level granularity.

\paragraph{Compiler-driven attention.}
FlexAttention~\citep{he2024flexattention} JIT-compiles arbitrary \texttt{mask\_mod} functions into Triton~\citep{tillet2019triton} kernels, offering maximal flexibility at the cost of per-mask compilation overhead and a code path separate from FA3's Hopper-native pipeline.  Under distributed CP, each rank independently reconstructs the local block mask at $O(QK/B^2)$ cost.  FlashMask-3 avoids both issues through a fixed kernel binary and zero-communication mask localization.

\paragraph{Learnable block-sparse attention.}
NSA~\citep{yuan2025nsa} combines compression, top-$k$ block selection, and sliding-window branches in an end-to-end differentiable architecture that is hardware-aligned and natively trainable from pretraining.  MoBA~\citep{lu2025moba} routes each query to its top-$k$ KV blocks via MoE-style gating, enabling seamless switching with full attention.  Both learn \emph{which} tokens to attend to and require architectural modification.  FlashMask-3 solves a complementary problem: efficiently computing a \emph{pre-specified} structural mask with no model changes required.

\paragraph{Structured mask representations.}
MagiAttention~\citep{tao2025magiattention} represents masks as AttnSlice triples $(Q_\text{range}, K_\text{range}, \text{MaskType})$, cut along Q/K axes for distribution; this requires a dedicated FFA kernel with per-slice atomics.  FlashMask~\citep{wang2025flashmask} uses column-wise sparse boundary vectors at $O(N)$ storage, covering 12+ structured mask patterns.  FlashMask-3 extends this to distributed settings through an elementwise clip--shift--max transform that preserves the single-GPU kernel interface at zero cross-rank communication.

\paragraph{Context parallelism.}
Ring Attention~\citep{liu2023ringattention} circulates KV on a device ring with dual-stream overlap, achieving ${\sim}$50\% GPU utilization under causal masks due to inherent load imbalance~\citep{li2023distflashattn}.  Striped Attention~\citep{brandon2023stripedattention} restores causal balance via round-robin token assignment (1.45--1.65$\times$ over Ring) but supports only causal masks.  DeepSpeed Ulysses~\citep{jacobs2023deepspeedulysses} converts to head parallelism via all-to-all, natively supporting arbitrary masks but requiring $N_\text{heads}\!\bmod P = 0$.  USP~\citep{fang2024usp} unifies Ulysses (intra-node) with Ring (inter-node).  Megatron CP~\citep{megatron-lm} employs dual chunking for causal balance; under irregular masks such as shared-question (RLHF) or multi-document masks, its linear-cost assumption fails and per-rank utilization degrades significantly.

\paragraph{Load balancing under irregular masks.}
DCP~\citep{jiang2025dcp} (SOSP 2025) models CP scheduling as hypergraph partitioning over block-level workloads, achieving $<$5\% imbalance at the cost of a heavyweight solver.  MagiAttention~\citep{tao2025magiattention} uses chunk-level LPT-greedy assignment~\citep{graham1969bounds} based on AttnSlice areas.  FlashMask-3 derives per-chunk valid-block counts directly from the column-wise boundary vectors and solves the scheduling subproblem with provable optimality guarantees (\S\ref{sec:method-load-balancing}), avoiding external solver overhead.

\paragraph{Compute--communication overlap.}
Ring Attention~\citep{liu2023ringattention} relies on dual CUDA streams, which suffer SM contention when large attention kernels starve communication of resources.  FLUX~\citep{chang2024flux} demonstrates that kernel-level tile fusion eliminates this contention for GEMMs, but online softmax prevents straightforward extension to attention.  MagiAttention uses DeepEP-based~\citep{deepseek2025deepep} GroupCast/GroupReduce with stream-level multi-stage overlap.  FlashMask-3 uses NVSHMEM~\citep{nvidia2023nvshmem} GPU-initiated one-sided communication with zero-copy symmetric buffers shared with the attention kernel, achieving high overlap without perturbing the WGMMA critical path.  Mesh-Attention~\citep{fan2025meshattention} reduces communication volume from $O(Nd)$ to $O(Nd/\!\sqrt{P})$ via a 2D assignment matrix, a topological optimization orthogonal and composable with FlashMask-3's latency-focused NVSHMEM overlap.

\section{Proofs and Additional Technical Details}
\label{app:proofs}

\subsection{Distributed mask transform: definitions, complete proof, and resource profile}
\label{app:proofs-distributed-mask}

We restate the building blocks of \S\ref{sec:method-distributed-mask} as formal definitions, prove Theorem~\ref{thm:distributed-mask-correctness} in full, and report the operator-level resource profile of the transform.

\begin{definition}[Single-chunk local clip]\label{def:single-chunk-clip}
For a query chunk $[O,O+C)$ in global coordinates and a global boundary vector $\mathbf{x}\in\mathbb{Z}_{\ge 0}^{N_K}$, the single-chunk local clip is
$$ \bigl[\mathcal{L}_{O,C}(\mathbf{x})\bigr]_k \;=\; \operatorname{clip}\bigl(\mathbf{x}_k - O,\; 0,\; C\bigr),\qquad k\in[0, N_K). $$
\end{definition}

\begin{definition}[Multi-chunk shifted clip, merge, and rank-local mask]\label{def:multi-chunk-clip}
Suppose rank $r$ owns $n_r$ disjoint query chunks $\{[O_i,O_i+C)\}_{i=0}^{n_r-1}$ sorted by global offset $O_0 < O_1 < \cdots < O_{n_r-1}$, mapped to the rank-local concatenated coordinate system $\bigsqcup_{i=0}^{n_r-1}[iC,(i{+}1)C)$. The multi-chunk localized boundary on rank $r$ is
\begin{equation}\label{eq:multi-chunk-merge}
\bigl[\mathcal{M}^{(r)}(\mathbf{x})\bigr]_k
\;=\; \max_{0\le i< n_r}\!\Bigl(\bigl[\mathcal{L}_{O_i,C}(\mathbf{x})\bigr]_k
+ \mathbf{1}\bigl[\mathcal{L}_{O_i,C}(\mathbf{x})_k > 0\bigr]\cdot iC\Bigr),
\quad k\in[0, N_K),
\end{equation}
which is the operator form of the main-paper transform~\eqref{eq:distributed-mask-transform}. The rank-local column-wise sparse mask is then $\mathbf{FM}^{(r)}_l = \bigl(\mathcal{M}^{(r)}(\mathrm{LTS}),\,\mathcal{M}^{(r)}(\mathrm{LTE}),\,\mathcal{M}^{(r)}(\mathrm{UTS}),\,\mathcal{M}^{(r)}(\mathrm{UTE})\bigr)$, retaining the same shape $4\times N_K$, dtype, and column-wise sparse semantics as~$\mathbf{FM}_g$.
\end{definition}

\begin{lemma}[Cross-chunk contribution bounds]\label{lem:cross-chunk-bounds}
Under the notation of Definition~\ref{def:multi-chunk-clip}, fix $k\in[0,N_K)$, $\tilde q\in[0,n_r C)$, and let $i^\star=\lfloor\tilde q/C\rfloor$, $\delta=\tilde q - i^\star C$, $q^\star = O_{i^\star}+\delta$. For any boundary vector $\mathrm{X}\in\{\mathrm{LTS},\mathrm{LTE},\mathrm{UTS},\mathrm{UTE}\}$, define
$$ f^{\mathrm{X}}(i) \;=\; \bigl[\mathcal{L}_{O_i,C}(\mathrm{X})\bigr]_k + \mathbf{1}\bigl[\mathcal{L}_{O_i,C}(\mathrm{X})_k > 0\bigr]\cdot iC. $$
Then the following bounds hold for all chunks $i\neq i^\star$:
\begin{enumerate}[label=(\roman*),leftmargin=2em]
\item For $i < i^\star$: \; $f^{\mathrm{X}}(i) \le (i{+}1)C \le i^\star C \le \tilde q$.
\item For $i > i^\star$ with $\mathrm{X}_k \le q^\star$: \; $f^{\mathrm{X}}(i) = 0$.
\item For $i > i^\star$ with $\mathrm{X}_k > q^\star$ and $\mathrm{X}_k > O_i$: \; $f^{\mathrm{X}}(i) \ge (i^\star{+}1)C + 1 > \tilde q$.
\item For $i > i^\star$ with $\mathrm{X}_k > q^\star$ and $\mathrm{X}_k \le O_i$: \; $f^{\mathrm{X}}(i) = 0$,  $f^{\mathrm{X}}(i^\star) \ge \tilde q + 1 > \tilde q$.
\end{enumerate}
\end{lemma}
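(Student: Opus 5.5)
The plan is to verify the four bounds directly from Definition~\ref{def:single-chunk-clip} and Definition~\ref{def:multi-chunk-clip}. Everything rests on two elementary facts that I establish first.

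\emph{Fact A (range of the clip).} Every clipped value $[\mathcal{L}_{O_i,C}(\mathrm{X})]_k$ lies in the integer interval $[0,C]$. Hence $f^{\mathrm{X}}(i)\in\{0\}\cup[iC+1,(i{+}1)C]$. In particular $f^{\mathrm{X}}(i)\le (i{+}1)C$ always holds, and $f^{\mathrm{X}}(i)>0$ holds exactly when $\mathrm{X}_k>O_i$.

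\emph{Fact B (spacing of offsets).} The chunks $[O_i,O_i+C)$ are disjoint, have equal length $C$, and are sorted by offset. Therefore $O_{i+1}\ge O_i+C$, and by induction $O_i\ge O_{i^\star}+(i-i^\star)C\ge O_{i^\star}+C$ for every $i>i^\star$.

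I also record the decomposition $\tilde q=i^\star C+\delta$ with $0\le\delta\le C-1$. This gives $i^\star C\le\tilde q<(i^\star{+}1)C$ and $q^\star=O_{i^\star}+\delta<O_{i^\star}+C$.

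With these in hand, each case is a short chain of inequalities.
\begin{enumerate}[label=(\roman*),leftmargin=2em]
\item For $i<i^\star$, Fact A gives $f^{\mathrm{X}}(i)\le (i{+}1)C$. Since $i+1\le i^\star$, we get $(i{+}1)C\le i^\star C\le\tilde q$.
\item For $i>i^\star$ with $\mathrm{X}_k\le q^\star$, Fact B gives $\mathrm{X}_k\le q^\star<O_{i^\star}+C\le O_i$. So $\mathrm{X}_k-O_i<0$, the clip is $0$, the indicator vanishes, and $f^{\mathrm{X}}(i)=0$.
\item For $i>i^\star$ with $\mathrm{X}_k>O_i$, integrality gives $\mathrm{X}_k-O_i\ge1$, so the clip is at least $1$ and the indicator fires. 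Hence $f^{\mathrm{X}}(i)\ge iC+1\ge(i^\star{+}1)C+1>\tilde q$.
\item For $i>i^\star$ with $q^\star<\mathrm{X}_k\le O_i$, the clip at chunk $i$ is $0$ exactly as in (ii), so $f^{\mathrm{X}}(i)=0$. For chunk $i^\star$, the hypothesis $\mathrm{X}_k>q^\star$ gives $\mathrm{X}_k-O_{i^\star}\ge\delta+1\ge1$. Since also $\delta+1\le C$, the clip equals $\min(\mathrm{X}_k-O_{i^\star},C)\ge\delta+1$ and the indicator fires. Therefore $f^{\mathrm{X}}(i^\star)\ge i^\star C+\delta+1=\tilde q+1$.
\end{enumerate}

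The step I expect to need the most care is Fact B together with integrality. The strict inequalities in (iii) and (iv) depend on the boundary vectors and offsets being integers, so that a positive gap is at least $1$. They also depend on the chunks being non-overlapping, so that $O_i\ge O_{i^\star}+(i-i^\star)C$ rather than merely $O_i>O_{i^\star}$; with mere ordering, (ii) could fail when a later chunk starts inside $[O_{i^\star},O_{i^\star}+C)$. I would therefore state the non-overlap assumption explicitly before beginning the case analysis. I would also note that the bound $\delta+1\le C$, needed in (iv), follows from $\tilde q<(i^\star{+}1)C$.
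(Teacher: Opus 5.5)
Your proof is correct and follows the paper's argument step for step. It uses the clip upper bound for (i), disjointness giving $O_i \ge O_{i^\star}+C > q^\star$ for (ii), and integrality forcing a clip of at least $1$ for (iii) and for chunk $i^\star$ in (iv). You are slightly more careful than the paper in two places: the paper writes $f^{\mathrm{X}}(i)=(\mathrm{X}_k-O_i)+iC$ in (iii), an equality that needs $\mathrm{X}_k-O_i\le C$ (only the lower bound is used), and it leaves implicit the fact $\delta+1\le C$ that you check in (iv).
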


\begin{proof}
(i)~By the clip upper bound, $[\mathcal{L}_{O_i,C}(\mathrm{X})]_k \le C$, so $f^{\mathrm{X}}(i)\le C + iC = (i{+}1)C$. Since $i < i^\star$, we have $(i{+}1)C \le i^\star C \le \tilde q$.

(ii)~For $i > i^\star$, disjointness gives $O_i \ge O_{i^\star}+C > q^\star \ge \mathrm{X}_k$. Hence $\mathrm{X}_k - O_i < 0$, the clip vanishes, and $f^{\mathrm{X}}(i)=0$.

(iii)~For $i > i^\star$ with $\mathrm{X}_k > q^\star = O_{i^\star}+\delta$ and $\mathrm{X}_k > O_i$, we have $\mathrm{X}_k - O_i \ge 1$, so the clip is positive and $[\mathcal{L}_{O_i,C}(\mathrm{X})]_k \ge 1$. Thus, $f^{\mathrm{X}}(i) = (\mathrm{X}_k - O_i) + iC \ge 1 + iC \ge (i^\star{+}1)C + 1 > \tilde q$.

(iv)~For $i > i^\star$ with $\mathrm{X}_k > q^\star$ and $\mathrm{X}_k \le O_i$, the clip vanishes and $f^{\mathrm{X}}(i)=0$. However, in this case chunk $i^\star$ itself satisfies $\mathrm{X}_k - O_{i^\star} \ge \delta+1 > 0$, so $f^{\mathrm{X}}(i^\star) \ge (\delta+1)+i^\star C = \tilde q + 1 > \tilde q$, meaning the maximum across chunks is already dominated by chunk~$i^\star$.
\end{proof}

\begin{theorem}[Distributed mask correctness]\label{thm:distributed-mask-correctness}
For every locally-indexed query $\tilde q \in [0, n_r C)$ and global key $k \in [0, N_K)$,
\[
\tilde q \in \bigl[\mathrm{LTS}^{(r)}_k,\mathrm{LTE}^{(r)}_k\bigr) \cup \bigl[\mathrm{UTS}^{(r)}_k,\mathrm{UTE}^{(r)}_k\bigr)
\;\Longleftrightarrow\;
q^\star \in [\mathrm{LTS}_k,\mathrm{LTE}_k) \cup [\mathrm{UTS}_k,\mathrm{UTE}_k),
\]
where $q^\star = O_{i^\star} + \delta$, $i^\star = \lfloor \tilde q/C\rfloor$, $\delta = \tilde q - i^\star C$.
\end{theorem}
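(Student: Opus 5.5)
The plan is to reduce the interval statement to a single comparison lemma for each boundary vector, and derive that lemma from the case analysis of Lemma~\ref{lem:cross-chunk-bounds}. For every $\mathrm{X}\in\{\mathrm{LTS},\mathrm{LTE},\mathrm{UTS},\mathrm{UTE}\}$, write $\mathrm{X}^{(r)}=\mathcal{M}^{(r)}(\mathrm{X})$. The claim to prove is
\[
\mathrm{X}^{(r)}_k \le \tilde q \;\Longleftrightarrow\; \mathrm{X}_k \le q^\star .
\]
Membership in a half-open interval $[a,b)$ is the Boolean condition ``$a\le q$ and not $b\le q$''. The union of the two intervals is therefore a fixed Boolean combination of the four comparisons $\mathrm{X}\le q$. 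Once each comparison is preserved, the theorem follows immediately, because both sides are the same Boolean expression applied to equivalent atoms. This avoids any need to reason about the intervals themselves, for example empty intervals or overlaps between the LT and UT pieces.

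\textbf{Case $\mathrm{X}_k\le q^\star$: show $\max_i f^{\mathrm{X}}(i)\le\tilde q$, chunk by chunk.}
\begin{itemize}
\item For $i<i^\star$, Lemma~\ref{lem:cross-chunk-bounds}(i) gives $f^{\mathrm{X}}(i)\le\tilde q$.
\item For $i>i^\star$, Lemma~\ref{lem:cross-chunk-bounds}(ii) gives $f^{\mathrm{X}}(i)=0\le\tilde q$.
\item For $i=i^\star$, one checks directly. Here $\mathrm{X}_k-O_{i^\star}\le\delta<C$, so the clip equals $\max(\mathrm{X}_k-O_{i^\star},0)$. If this is $0$, the indicator kills the shift and $f^{\mathrm{X}}(i^\star)=0$. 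Otherwise $f^{\mathrm{X}}(i^\star)=(\mathrm{X}_k-O_{i^\star})+i^\star C\le\delta+i^\star C=\tilde q$.
\end{itemize}
Hence $\mathrm{X}^{(r)}_k\le\tilde q$.

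\textbf{Case $\mathrm{X}_k>q^\star$: exhibit one chunk whose term exceeds $\tilde q$.} It suffices to use chunk $i^\star$. Now $\mathrm{X}_k-O_{i^\star}\ge\delta+1\ge1$, so the clip is positive and the indicator fires. The clip equals $\min(\mathrm{X}_k-O_{i^\star},C)$. Both arguments of the min are at least $\delta+1$, since $\delta\le C-1$. Therefore $f^{\mathrm{X}}(i^\star)\ge\delta+1+i^\star C=\tilde q+1$, so $\mathrm{X}^{(r)}_k>\tilde q$.

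This is essentially part (iv) of the lemma. The sub-case of part (iii) is consistent with it: a later chunk may contribute even more, but it only raises the max, which is harmless in this direction.

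\textbf{Expected obstacle.} The delicate step is the $i=i^\star$ computation. The saturation of the clip at $C$ and the indicator interact there, and one must check that saturation can never pull the value down to $\le\tilde q$; this is exactly where $\delta<C$ is used. One must also check that a vanishing clip never produces a spurious shift of $i^\star C$. Cross-chunk leakage, where a later chunk might lower the max, cannot occur because we take a maximum. Leakage upward from earlier chunks is exactly what Lemma~\ref{lem:cross-chunk-bounds}(i) rules out.
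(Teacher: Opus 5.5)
Your proof is correct and follows the paper's argument: you use the same direct evaluation of chunk $i^\star$ (including the check that saturation at $C$ is harmless because $\delta<C$), together with Lemma~\ref{lem:cross-chunk-bounds}(i)--(ii) for the other chunks. The difference is only organizational. You prove the order-preservation statement $\mathrm{X}^{(r)}_k \le \tilde q \Leftrightarrow \mathrm{X}_k \le q^\star$ once for all four boundary vectors and finish by a Boolean combination, whereas the paper re-derives the two one-sided halves inline (Steps~1--2 and Cases~1--2) for $\mathrm{LTS}$ and $\mathrm{LTE}$ inside an interval-membership case split, so your factoring is slightly cleaner.
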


\begin{proof}
It suffices to prove the equivalence for the lower-triangular pair; the upper-triangular pair follows by substituting $(\mathrm{UTS},\mathrm{UTE})$ for $(\mathrm{LTS},\mathrm{LTE})$ throughout. Recall that $\mathrm{LTS}^{(r)}_k = \max_i f^{\mathrm{LTS}}(i)$ and $\mathrm{LTE}^{(r)}_k = \max_i f^{\mathrm{LTE}}(i)$.

\medskip
\noindent\textbf{($\boldsymbol{\Rightarrow}$)\; $q^\star \in [\mathrm{LTS}_k, \mathrm{LTE}_k)$ implies $\tilde q \in [\mathrm{LTS}^{(r)}_k, \mathrm{LTE}^{(r)}_k)$.}

\smallskip
Assume $\mathrm{LTS}_k \le q^\star < \mathrm{LTE}_k$.

\begin{enumerate}[label=\textit{Step~\arabic*},leftmargin=3.5em]
\item \textit{(Upper bound is strict: $\mathrm{LTE}^{(r)}_k > \tilde q$).}\;
Since $\mathrm{LTE}_k > q^\star = O_{i^\star}+\delta$, we have $\mathrm{LTE}_k - O_{i^\star} \ge \delta+1 > 0$, so the clip satisfies $[\mathcal{L}_{O_{i^\star},C}(\mathrm{LTE})]_k \ge \delta+1$ and the indicator is active:
$$ f^{\mathrm{LTE}}(i^\star) \;=\; [\mathcal{L}_{O_{i^\star},C}(\mathrm{LTE})]_k + i^\star C \;\ge\; (\delta+1)+i^\star C \;=\; \tilde q+1. $$
Hence $\mathrm{LTE}^{(r)}_k \ge f^{\mathrm{LTE}}(i^\star) > \tilde q$.

\item \textit{(Lower bound is met: $\mathrm{LTS}^{(r)}_k \le \tilde q$).}\;
Since $\mathrm{LTS}_k \le q^\star = O_{i^\star}+\delta$, we have $\mathrm{LTS}_k - O_{i^\star} \le \delta$, giving $[\mathcal{L}_{O_{i^\star},C}(\mathrm{LTS})]_k \le \delta$. If the clip is positive, $f^{\mathrm{LTS}}(i^\star) \le \delta + i^\star C = \tilde q$; if zero, $f^{\mathrm{LTS}}(i^\star) = 0 \le \tilde q$. For $i\neq i^\star$: Lemma~\ref{lem:cross-chunk-bounds}(i) bounds chunks $i<i^\star$ by $\tilde q$, and part~(ii) (with $\mathrm{LTS}_k \le q^\star$) zeros chunks $i>i^\star$. Thus $\mathrm{LTS}^{(r)}_k = \max_i f^{\mathrm{LTS}}(i) \le \tilde q$.
\end{enumerate}

\smallskip
Combining: $\mathrm{LTS}^{(r)}_k \le \tilde q < \mathrm{LTE}^{(r)}_k$, i.e., $\tilde q \in [\mathrm{LTS}^{(r)}_k,\mathrm{LTE}^{(r)}_k)$.

\medskip
\noindent\textbf{($\boldsymbol{\Leftarrow}$)\; $q^\star \notin [\mathrm{LTS}_k, \mathrm{LTE}_k)$ implies $\tilde q \notin [\mathrm{LTS}^{(r)}_k, \mathrm{LTE}^{(r)}_k)$.}

\begin{enumerate}[label=\textit{Case~\arabic*},leftmargin=3.5em]
\item \textit{($q^\star < \mathrm{LTS}_k$: query below the interval start).}\;
Then $\mathrm{LTS}_k - O_{i^\star} > \delta \ge 0$, so $[\mathcal{L}_{O_{i^\star},C}(\mathrm{LTS})]_k \ge \delta+1 > 0$ and
$$ f^{\mathrm{LTS}}(i^\star) \;\ge\; (\delta+1) + i^\star C \;=\; \tilde q + 1 \;>\; \tilde q. $$
Hence $\mathrm{LTS}^{(r)}_k \ge f^{\mathrm{LTS}}(i^\star) > \tilde q$, giving $\tilde q < \mathrm{LTS}^{(r)}_k$ and thus $\tilde q \notin [\mathrm{LTS}^{(r)}_k,\mathrm{LTE}^{(r)}_k)$.

\item \textit{($q^\star \ge \mathrm{LTE}_k$: query at or above the interval end).}\;
Then $\mathrm{LTE}_k - O_{i^\star} \le \delta$, so $[\mathcal{L}_{O_{i^\star},C}(\mathrm{LTE})]_k \le \delta$. If the clip is positive, $f^{\mathrm{LTE}}(i^\star) \le \delta + i^\star C = \tilde q$; if zero, $f^{\mathrm{LTE}}(i^\star) = 0 \le \tilde q$. For $i < i^\star$, Lemma~\ref{lem:cross-chunk-bounds}(i) gives $f^{\mathrm{LTE}}(i) \le \tilde q$. For $i > i^\star$: since $\mathrm{LTE}_k \le q^\star + 1 \le O_{i^\star}+C \le O_i$ (by integrality and disjointness), the clip vanishes and $f^{\mathrm{LTE}}(i) = 0$. Thus $\mathrm{LTE}^{(r)}_k \le \tilde q$, so $\tilde q \ge \mathrm{LTE}^{(r)}_k$ and $\tilde q \notin [\mathrm{LTS}^{(r)}_k,\mathrm{LTE}^{(r)}_k)$.
\end{enumerate}

\medskip
\noindent\textbf{Indicator function.}\;
The indicator $\mathbf{1}[\cdot>0]$ in~\eqref{eq:multi-chunk-merge} is essential: without it, a chunk with vanishing clip would contribute $f^{\mathrm{X}}(i)=iC$, potentially dominating the maximum for large $i$ and corrupting the localized boundary.

\medskip
\noindent\textbf{Upper-triangular pair and conclusion.}\;
Replacing $(\mathrm{LTS},\mathrm{LTE})$ with $(\mathrm{UTS},\mathrm{UTE})$ and repeating the above two-case analysis yields $\tilde q \in [\mathrm{UTS}^{(r)}_k,\mathrm{UTE}^{(r)}_k) \Leftrightarrow q^\star \in [\mathrm{UTS}_k,\mathrm{UTE}_k)$. Taking the union over both pairs completes the proof.
\end{proof}

\begin{table}[h]
\centering
\small
\caption{Resource profile of the FlashMask-3 distributed mask preprocessing on rank $r$ owning $n_r$ chunks. $N_K$ denotes the global key length. All operations are elementwise and parallelized over the key dimension.}
\label{tab:mask-distribution-resources}
\resizebox{\linewidth}{!}{%
\begin{tabular}{lcccc}
\toprule
Operation & Time & Memory & Cross-rank comm. & Kernel interface \\
\midrule
Single-chunk clip (Def.~\ref{def:single-chunk-clip}) & $O(N_K)$ & $O(N_K)$ & 0 & shared \\
Multi-chunk shift+merge (Def.~\ref{def:multi-chunk-clip}) & $O(n_r N_K)$ & $O(N_K)$ & 0 & shared \\
\midrule
Total mask preprocessing & $O(n_r N_K)$ & $O(N_K)$ & \textbf{0} & \textbf{shared} \\
\bottomrule
\end{tabular}%
}
\end{table}

\subsection{Workload computation for each Q-chunk}
\label{app:workload-computation}

The load balancer of \S\ref{sec:method-load-balancing} requires a per-chunk compute load $w_i$ that reflects the mask-induced sparsity. For each pair $(Q\text{-block},\, K\text{-block})$, the blockwise max/min of the rank-local boundary vectors (\S\ref{sec:background-flashmask}) classifies the tile as fully-masked, partially-masked, or unmasked, with associated costs
$$
\mathrm{cost}(q, k) \;\in\; \{0,\; \alpha_p,\; \alpha_u\},
$$
where $\alpha_p \geq \alpha_u > 0$. The cost $\alpha_p$ reflects the additional row-interval mask applied to partially-masked tiles (\S\ref{sec:method-kernel}), while fully-masked tiles incur zero cost because they are skipped entirely by the KV Index Generator. Aggregating these tile costs across all key blocks and all batch/head dimensions yields, for each chunk $i$, its total compute load:
$$
w_i \;=\; \sum_{\substack{q \in \text{chunk } i,\\ k \in [0, N_K/B)}} \mathrm{cost}(q, k).
$$
This computation is performed before HCC-IPO scheduling. Its time complexity is $O(N_K^2 / B)$ per batch/head (one pass over the $N_Q/B \times N_K/B$ tile grid), and the resulting $\{w_i\}$ are broadcast-free since every rank computes the same values from the same global mask.
\subsection{HCC-IPO: Complete Proofs and Pseudocode}
\label{app:proofs-hcc-ipo}

This appendix supplies the deferred material from \S\ref{sec:method-load-balancing}: the IPO scheduler in pseudocode, its 2-way and 3-way sub-solvers, and formal proofs of correctness.

\paragraph{Notation.} We follow \S\ref{sec:method-load-balancing}: $N$ chunks with non-negative integer weights $w_i\in\mathbb{Z}_{\ge 0}$, $P$ ranks, per-rank cardinality $\kappa=N/P$, assignment $\pi:[N]\to[P]$ with rank set $S_j(\pi)=\pi^{-1}(j)$ and rank load $L_j(\pi)=\sum_{i\in S_j(\pi)}w_i$. For an arbitrary chunk set $S$, define $L(S)\triangleq\sum_{i\in S}w_i$. The \emph{makespan} is $\Phi(\pi)\triangleq\max_{j\in[P]}L_j(\pi)$; when the assignment is clear from context we write simply $\Phi$. The lower bound is $\mathit{LB}\triangleq\max\bigl(\lceil\tfrac{1}{P}\sum_i w_i\rceil,\,\max_i w_i\bigr)$.

\subsubsection{Algorithm: IPO Scheduler for Stage~1}
\label{app:ipo-algorithm}

Algorithm~\ref{alg:ipo} solves~\eqref{eq:rank-balance} by iteratively repartitioning the heaviest rank together with one or two partner ranks via exact combinatorial sub-solvers. Phase~2 performs pairwise repartitioning; Phase~3 performs triplewise repartitioning when Phase~2 alone cannot improve.

\begin{algorithm}[h]
\small
\caption{IPO scheduler for cardinality-constrained min-max partitioning}
\label{alg:ipo}
\begin{algorithmic}[1]
\Require Chunk weights $\{w_i\}_{i=1}^{N}$, number of ranks $P$, cardinality $\kappa = N/P$
\Ensure Assignment $\{S_j\}_{j=1}^{P}$ with $|S_j|=\kappa$
\State $\{S_j\}\gets\Call{LPT}{\{w_i\}, P, \kappa}$ \Comment{Assign heaviest chunk to lightest rank}
\State $\mathit{LB}\gets\max\bigl(\lceil\sum_i w_i / P\rceil,\,\max_i w_i\bigr)$
\Repeat
  \State $H\gets\arg\max_j L(S_j)$ \Comment{Heaviest rank (smallest index on ties)}
  \If{$L(S_H)\le\mathit{LB}$} \Return $\{S_j\}$ \Comment{Certified optimal}
  \EndIf
  \State $\triangleright$ \textbf{Phase~2 (pairwise):} evaluate all partners, commit best
  \ForAll{$R\in[P]\setminus\{H\}$}
    \State $(S'_H,S'_R)\gets\Call{Exact2Way}{S_H\cup S_R,\,\kappa}$; record $R$ if global makespan improves
  \EndFor
  \If{best $R^*$ yields global makespan $<\,\Phi$} update $S_H,S_{R^*}$; \textbf{continue}
  \EndIf
  \State $\triangleright$ \textbf{Phase~3 (triplewise, $\kappa\le 8$):} commit first feasible triple
  \ForAll{$(R_1,R_2)\in\binom{[P]\setminus\{H\}}{2}$, ascending by load, pruned by bounds}
    \State $(S'_H,S'_{R_1},S'_{R_2})\gets\Call{Feas3Way}{S_H\cup S_{R_1}\cup S_{R_2},\,\kappa,\,\Phi-1}$
    \If{feasible and global makespan $<\,\Phi$} update; \textbf{break}
    \EndIf
  \EndFor
\Until{neither phase improves $\Phi$}
\State \Return $\{S_j\}_{j=1}^{P}$
\end{algorithmic}
\end{algorithm}

\subsubsection{Sub-solvers}
\label{app:ipo-subsolvers}

\paragraph{\textsc{Exact2Way}.}
Given a pool $U$ of $2\kappa$ chunks, returns a bipartition $(A,B)$ with $|A|=|B|=\kappa$ that minimizes $\max(L(A),L(B))$ over all $\binom{2\kappa}{\kappa}$ candidates. For $\kappa\le 10$, we use Meet-in-the-Middle: split $U$ into two halves, enumerate all $2^\kappa$ subsets of each, bucket by cardinality, sort by partial sum, and binary-search complementary half-sums in $O(2^\kappa\kappa)$ time. For $\kappa > 10$, we binary-search the target value $T$ and solve a cardinality-constrained feasibility problem via Branch-and-Bound (BnB) with symmetry (skip assigning to the second bucket when both buckets have identical load and count), suffix-sum (backtrack when remaining items cannot fill both buckets within $T$), and equal-weight pruning (avoid redundant branches when consecutive items share the same weight).

\paragraph{\textsc{Feas3Way}.}
Given a pool $U$ of $3\kappa$ chunks and a target $T$, decides whether $U$ admits a tripartition $(A,B,C)$ with $|A|=|B|=|C|=\kappa$ and $\max(L(A),L(B),L(C))\le T$. Uses depth-first BnB over element-to-bucket assignments in descending weight order, with four pruning rules: (i)~cardinality saturation (skip buckets already holding $\kappa$ items), (ii)~load overflow (skip if adding the current item exceeds $T$), (iii)~suffix-sum infeasibility (the same as 2-way solver), and (iv)~bucket-state symmetry (skip assignments to buckets with identical load and count, as they yield equivalent subtrees). Active only when $\kappa\le 8$.

\subsubsection{Theoretical Guarantees}
\label{app:ipo-proofs}

\begin{theorem}[IPO correctness]
\label{thm:ipo-optimality}
Let $\pi^*$ denote the output of Algorithm~\ref{alg:ipo}.
\begin{enumerate}[label=(\alph*),leftmargin=*,nosep]
    \item \textbf{Monotone convergence.} The makespan $\Phi$ strictly decreases at each iteration that commits an update, and the algorithm terminates in finitely many iterations.
    \item \textbf{Optimality.} If $\Phi(\pi^*)=\mathit{LB}$, then $\pi^*$ is a global optimum of~\eqref{eq:rank-balance}.
    \item \textbf{Determinism.} The output $\pi^*$ is a deterministic function of $(\{w_i\},P,\kappa)$, ensuring all CP ranks derive the same assignment without communication.
\end{enumerate}
\end{theorem}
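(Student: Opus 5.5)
The plan is to prove the three parts separately. Each rests on the structure of Algorithm~\ref{alg:ipo}: every committed update must pass an explicit strict-improvement test, and the whole procedure is built from deterministic operations on the integer weights.

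\textbf{(a) Monotone convergence.} An update is committed only when the new global makespan satisfies $\Phi'<\Phi$. In Phase~2 this is the test ``best $R^*$ yields global makespan $<\Phi$''. In Phase~3 the test is ``feasible and global makespan $<\Phi$'', and \textsc{Feas3Way} is called with target $\Phi-1$ precisely to enforce it.

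Two further facts are needed. First, every update repartitions a pool of $2\kappa$ or $3\kappa$ chunks into buckets of exactly $\kappa$ each, and leaves all other ranks untouched. So the cardinality constraint $|S_j|=\kappa$ is preserved, and every iterate is feasible for~\eqref{eq:rank-balance}. Second, the weights lie in $\mathbb{Z}_{\ge0}$, so $\Phi$ is a nonnegative integer, and each committed update lowers it by at least $1$.

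It follows that at most $\Phi(\pi_{\mathrm{LPT}})-\mathit{LB}+1$ updates can be committed. Every makespan is at least $\mathit{LB}$, as shown in (b), and the certified-return test stops the loop once $\Phi\le\mathit{LB}$. Any iteration that commits nothing hits the \textbf{until} condition and terminates. Each individual iteration is finite: Phase~2 makes $P-1$ calls to a finite exact solver, Meet-in-the-Middle or BnB over a finite search space, and Phase~3 makes at most $\binom{P-1}{2}$ calls to a finite BnB. Finiteness follows.

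\textbf{(b) Optimality.} The plan is to show that $\mathit{LB}$ lower-bounds the makespan of every feasible $\pi$. Averaging gives $\max_j L_j\ge \frac1P\sum_j L_j=\frac1P\sum_i w_i$. Since $\max_j L_j$ is an integer, it is at least $\lceil\frac1P\sum_i w_i\rceil$. Also, the rank holding the heaviest chunk has load at least $\max_i w_i$, because all weights are nonnegative. Hence $\Phi(\pi)\ge\mathit{LB}$ for all feasible $\pi$. If $\Phi(\pi^*)=\mathit{LB}$ and $\pi^*$ is feasible by (a), then $\pi^*$ attains the lower bound and is a global minimizer.

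\textbf{(c) Determinism.} I would argue by induction over the algorithm's steps that every choice is a fixed function of the inputs.
\begin{itemize}
\item LPT uses a fixed tie-breaking order, by chunk index and rank index.
\item $H$ is the arg max with smallest-index tie-breaking.
\item Phase~2 scans partners in index order and keeps the first best one.
\item The sub-solvers are deterministic enumerations or searches with a fixed item order: descending weight with index tie-break, fixed bucket order, and fixed pruning rules.
\item Phase~3 scans pairs in a fixed ascending-load order with index tie-break and commits the first feasible pair.
\end{itemize}
No randomness or rank-dependent state enters, so identical inputs give identical outputs on every rank. Combined with the observation in Appendix~\ref{app:workload-computation} that every rank computes the same $\{w_i\}$ from the same global mask, no communication is needed.

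\textbf{Main obstacle.} The main obstacle is (c): the pseudocode leaves tie-breaking inside LPT, the sub-solvers, and the Phase~3 ordering implicit. I would state these conventions explicitly as part of the algorithm's specification, and note that any consistent total order suffices. A secondary subtlety in (a) is confirming that the ``global makespan'' test is applied to the whole assignment, not only to the repartitioned ranks; I would read the pseudocode as requiring this.
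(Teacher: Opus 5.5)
Your proposal is correct and follows essentially the same route as the paper. Part (a) uses the strict decrease of the integer makespan together with the $\mathit{LB}$ floor, part (b) uses the averaging and heaviest-chunk lower bounds, and part (c) fixes tie-breaking conventions explicitly, which the paper likewise does inside its proof rather than in the pseudocode. The differences are minor: you additionally check that the cardinality constraint is preserved and that each iteration is finite, and your count of committed updates has an unnecessary $+1$, since $\Phi^{(0)}-k\ge\mathit{LB}$ already gives at most $\Phi^{(0)}-\mathit{LB}$ updates.
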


\begin{proof}[Proof of (a)]
The algorithm commits an update only when the resulting global makespan is strictly less than the current~$\Phi$: Phase~2 commits only if the best partner yields a new global maximum $<\Phi$, and Phase~3 commits only if a tripartition with all three loads $\le\Phi-1$ exists and the resulting global maximum $<\Phi$. Hence each committed iteration satisfies $\Phi^{(t+1)}<\Phi^{(t)}$. When no phase can improve~$\Phi$, the loop terminates. Since $\Phi\in\mathbb{Z}_{\ge 0}$ (all $w_i$ are non-negative integers) and $\Phi\ge\mathit{LB}\ge 0$ for any feasible assignment, the algorithm terminates in at most $\Phi^{(0)}-\mathit{LB}$ committed iterations, where $\Phi^{(0)}$ is the LPT makespan.
\end{proof}

\begin{proof}[Proof of (b)]
For any feasible $\pi$ with $|S_j(\pi)|=\kappa$: (i)~$\Phi(\pi)\ge\frac{1}{P}\sum_i w_i$ since the maximum is at least the average, and integrality gives $\Phi(\pi)\ge\lceil\frac{1}{P}\sum_i w_i\rceil$; (ii)~the heaviest chunk resides in some rank, so $\Phi(\pi)\ge\max_i w_i$. Hence $\Phi(\pi)\ge\mathit{LB}$ for all feasible~$\pi$. When Algorithm~\ref{alg:ipo} exits via the certificate check with $\Phi(\pi^*)\le\mathit{LB}$, we obtain $\Phi(\pi^*)=\mathit{LB}\le\Phi(\pi)$ for every feasible~$\pi$, certifying global optimality.
\end{proof}

\begin{proof}[Proof of (c)]
All algorithmic choices are deterministic functions of the current state $(\{S_j\},\{w_i\})$: (i)~LPT sorts by descending $w_i$ with ascending index as tie-breaker, and fills by ascending load with ascending rank index as tie-breaker; (ii)~$H$ is the smallest-index rank among those with maximum load; (iii)~Phase~2 evaluates all partners and commits the one yielding the lowest global makespan (ties broken by smallest partner index); (iv)~Phase~3 iterates partner pairs in a fixed order (ascending load, ties by index) and commits the first feasible triple; (v)~sub-solvers enumerate in descending-weight, ascending-index order, yielding a unique output for each input. Since the initial state is uniquely determined by the input via~(i), and each iteration maps to a unique successor via~(ii)--(v), the output~$\pi^*$ is uniquely determined.
\end{proof}

The condition $\Phi=\mathit{LB}$ acts as a verifiable optimality certificate, checkable in $O(N)$ time.

\paragraph{Discussion.} The cardinality-constrained min-max $P$-way partitioning problem~\eqref{eq:rank-balance} is NP-hard in general, and an exact solver would require Branch-and-Bound over the full $P^N$ assignment space. IPO deliberately restricts each iteration to 2- or 3-rank sub-problems whose exact solutions are tractable (via MitM or BnB with tight pruning), trading global exhaustiveness for polynomial per-iteration cost. This local-search structure cannot theoretically guarantee convergence to the global optimum in all cases. In practice, however, the LB certificate (Theorem~\ref{thm:ipo-optimality}(b)) provides a sufficient condition that is verifiable in $O(N)$ time: whenever IPO attains $\Phi=\mathit{LB}$, global optimality is certified without requiring full enumeration.

Table~\ref{tab:ipo-ablation} ablates the contribution of each IPO phase. We report the \emph{sub-optimality rate} (fraction of cases where the method exceeds the proven optimum) and the \emph{maximum gap} (worst-case relative deviation from optimal). The reference optimum is obtained by an exact $P$-way BnB solver with a generous node budget; for configurations where this solver times out, we use the LPT+P2+P3 result as reference (denoted~$\dagger$).

\begin{table}[h]
\centering
\small
\caption{IPO phase ablation across representative CP configurations. \emph{Subopt}: fraction of random instances not matching the optimum. \emph{Max gap}: worst-case relative excess over optimal makespan. \emph{P3 wins}: cases where Phase~3 improves upon Phase~2. All configurations use 200 random instances (100 for $N\!=\!128$).}
\label{tab:ipo-ablation}
\begin{tabular}{@{}llcccccc@{}}
\toprule
 & & \multicolumn{2}{c}{LPT} & \multicolumn{2}{c}{LPT+P2} & \multicolumn{2}{c}{LPT+P2+P3} \\
\cmidrule(lr){3-4}\cmidrule(lr){5-6}\cmidrule(lr){7-8}
Configuration & $(N,P,\kappa,w_{\max})$ & Subopt & Gap & Subopt & Gap & Subopt & P3 wins \\
\midrule
Uniform           & $(64,16,4,10^3)$  & 99.0\% & 6.54\% & 86.0\% & 6.29\% & 0\% & 172 \\
Uniform (wide)    & $(64,16,4,10^4)$  & 100\%  & 8.67\% & 93.5\% & 3.42\% & 0\% & 187 \\
Causal            & $(64,8,8,200)$    & 93.5\% & 2.51\% & 53.5\% & 1.39\% & 0\% & 107 \\
Uniform$^\dagger$ & $(128,16,8,200)$  & 85.0\% & 1.08\% & 48.0\% & 0.79\% & 0\% &  48 \\
\bottomrule
\end{tabular}
\end{table}

LPT alone is suboptimal in 85--100\% of cases. Phase~2 (exact pairwise repartitioning) substantially reduces the gap but cannot escape local minima requiring simultaneous three-rank adjustments. Phase~3 resolves all remaining cases, achieving a 100\% LB-certified optimality rate across all of our tested configurations.

\subsubsection{Non-interference of Stages~2 and 3}
\label{app:stage23-noninterference}

\begin{proof}[Proof that Stages~2 and 3 do not interfere]
We show Stage~3 preserves Stage~2's communication reduction while improving compute balance.

\emph{Step~1.} Stage~2 performs one-for-one chunk swaps between nodes $m_{\max}$ and $m_{\min}$, preserving $|S_j|=\kappa$ for every rank.

\emph{Step~2.} Stage~3 runs IPO restricted to ranks $\mathcal{R}_m$ on chunk pool $T_m=\bigcup_{j\in\mathcal{R}_m}S_j$, independently per node. No chunk leaves its node, so $T_m$ is preserved exactly.

\emph{Step~3.} By~\eqref{eq:comm}, $\mathrm{Comm}(m,\pi)=\bigl|\bigcup_{i\in T_m}K_i\setminus\mathrm{Local}(m)\bigr|$ depends only on $T_m$ (which determines both the KV-dependency union and the locally available K-blocks). Since $T_m$ is unchanged (Step~2), $\mathrm{Comm}(m,\pi)$ is invariant under Stage~3.

\emph{Step~4.} By Theorem~\ref{thm:ipo-optimality}(a) applied per node, the per-node makespan $\max_{j\in\mathcal{R}_m}L_j$ is non-increasing under Stage~3. Taking the global maximum, $\Phi$ is non-increasing across Stage~3.
\end{proof}

\subsection{Forward all-gather protocol and pseudocode}
\label{app:proofs-overlap-fwd}

This appendix formalizes the forward AG overlap protocol of \S\ref{sec:method-overlap}.

\subsubsection{Write-pointer synchronization}
\label{app:write-ptr}

The AG communication kernel runs $N_\text{cta}$ CTAs on a high-priority stream, dynamically claiming work units (each covering $R$ KV rows) via an atomic counter.
A monotonically increasing \emph{write pointer} (\texttt{wptr}) signals progress to the attention kernel: all rows at offsets $< \texttt{wptr}$ are guaranteed globally visible.

\paragraph{Producer (AG kernel).}
Each CTA publishes its latest completed work ID to a per-CTA slot.
Warp~0 reads all $N_\text{cta}$ slots via \texttt{\_\_reduce\_min\_sync}, yielding a conservative contiguous frontier $w_\text{min}$.
A \texttt{\_\_threadfence} then ensures KV store visibility before \texttt{atomicMax(wptr, $w_\text{min} \times R$)} advances the pointer.

\paragraph{Consumer (attention kernel).}
The KV Loader iterates blocks right-to-left. For block $n$, it computes
$\texttt{target}(n) = b\,(S_\text{total} {-} S_\text{local}) + (S_\text{total} {-} n B_N {-} S_\text{local})$.
Blocks within the local shard ($S_\text{total} {-} n B_N {-} S_\text{local} < 0$) skip the wait; remote blocks spin on \texttt{wptr} via \texttt{ld.volatile.global} with a cached last-seen value.

\subsubsection{Deadlock prevention}
\label{app:stream-coord}

Two mechanisms prevent the attention grid from starving the AG kernel of SM resources. Since the attention kernel depends on \texttt{wptr}, the AG kernels must be scheduled ahead of the computation kernels to guarantee forward progress:
(1)~each AG CTA sets a bit in a GPU-memory coordinator word upon physical scheduling; a sentinel kernel on the compute stream spin-waits until all bits are set before the attention grid launches;
(2)~the attention tile scheduler is capped at $\texttt{num\_sm} {-} \texttt{sm\_margin}$ tiles, reserving capacity for the AG kernel. Yet in practice, we find that the sentinel stream coordinator kernel alone is sufficient to ensure correct SM scheduling, while enabling \texttt{sm\_margin} may negatively impact computation performance. \texttt{sm\_margin} is therefore set to $0$ throughout the experiments.

\subsubsection{Hierarchical two-phase all-gather}
\label{app:hier-ag}

Algorithm~\ref{alg:hier-ag} gives the per-CTA logic for routing communication through congruence groups $\mathcal{C}(r)$ (Eq.~\ref{eq:congruence}).

\begin{algorithm}[t]
\small
\caption{Hierarchical AG communication kernel (per CTA)}\label{alg:hier-ag}
\begin{algorithmic}[1]
\Require SR buffer $\mathcal{B}$ ($P$ slots); \texttt{copy\_chunk\_mask}; per-rank semaphores
\While{work remains}
    \State $w \gets \textsc{AtomicAdd}(\texttt{counter},\,1)$ \Comment{Dynamic scheduling}
    \State Map $w \to (\texttt{target\_rank},\,\texttt{src\_pe},\,\texttt{is\_phase1})$
    \If{$\lnot\,\texttt{is\_phase1} \land \texttt{copy\_chunk\_mask}[w]$} \Comment{Sparse skip (Phase~2 only)}
        \State Advance \texttt{wptr}; \textbf{continue}
    \EndIf
    \State \textsc{WaitSemaphore}(\texttt{semaphores[target\_rank]}) \Comment{Source data ready}
    \If{\texttt{is\_phase1}} \Comment{Cross-node (RoCE)}
        \State \textsc{NvshmemGet}($\mathcal{B}[\texttt{dst}] \gets \texttt{target\_rank}$'s local KV)
        \State \textsc{CongruenceNotify}(same-node peers) \Comment{Enable Phase~2}
    \Else \Comment{Intra-node (NVLink)}
        \State \textsc{NvshmemGet}($\mathcal{B}[\texttt{dst}] \gets \texttt{src\_pe}$'s $\mathcal{B}[\texttt{slot}]$)
    \EndIf
    \State Advance \texttt{wptr} via reduce-min (\S\ref{app:write-ptr})
\EndWhile
\end{algorithmic}
\end{algorithm}

\paragraph{Phase~1 (RoCE).} Each device fetches $N{-}1$ chunks from cross-node congruence-group partners. Upon completing all work units for a remote rank, it broadcasts a semaphore signal to $G{-}1$ same-node peers, enabling their Phase~2 fetches.

\paragraph{Phase~2 (NVLink).} Each device retrieves $(G{-}1) \cdot N$ chunks from same-node neighbors whose Phase~1 data is already resident, gated by the congruence-notify semaphore.

\paragraph{Sparse skip constraint.} Phase~1 fetches are never skipped regardless of the mask, because same-node peers depend on the fetched data for Phase~2.

\paragraph{Fetch budget.}
\begin{center}\small
\begin{tabular}{@{}lcc@{}}
\toprule
Scheme & RoCE fetches/device & NVLink fetches/device \\
\midrule
Flat AG & $P - 1$ & $0$ \\
Hierarchical AG & $N - 1$ & $(G{-}1) \cdot N$ \\
\bottomrule
\end{tabular}
\end{center}
For $P{=}32$, $G{=}8$, $N{=}4$: cross-node traffic drops from 24 to 3 fetches per device (${\approx}90\%$ reduction).

\subsubsection{Overall timeline}
\label{app:fwd-summary}

\begin{enumerate}[nosep,leftmargin=*]
\item Copy local $K,V$ into SR buffer; barrier PEs with per-rank semaphores.
\item Compute \texttt{copy\_chunk\_mask} (sparsity check).
\item Launch AG kernel (comm stream) $\to$ sentinel wait (compute stream) $\to$ attention kernel.
\end{enumerate}
The attention kernel performs a single uninterrupted pass over the full post-AG $K,V$, with no cross-step softmax rescaling and no ring-step bookkeeping.

\subsection{Backward multi-stage reduce-scatter overlap}
\label{app:proofs-overlap-bwd}

This appendix formalizes the backward RS overlap protocol of \S\ref{sec:method-overlap}.

\subsubsection{Per-chunk synchronization}
\label{app:bwd-work-done}

The forward AG uses a contiguous write-pointer frontier \texttt{wptr} because the attention kernel scans KV blocks sequentially.
The backward kernel is \emph{column-parallel}: each CTA owns one KV chunk and iterates over all Q blocks independently, requiring no cross-chunk ordering.
The communication kernel therefore replaces the write pointer with per-chunk \texttt{work\_done} flags: after fetching chunk $c$, the CTA issues \texttt{\_\_threadfence} (ensuring data visibility) then sets \texttt{work\_done[$c$]$\,{=}\,$1}.
The backward compute CTA polls this flag directly before loading its chunk.

\subsubsection{Bitmask-AMO reduce-scatter protocol}
\label{app:bitmask-amo}

RS is realized as a sparse all-to-all via NVSHMEM one-sided \texttt{put}.
Each device simultaneously acts as a \emph{producer} (sending $\mathrm{d}K/\mathrm{d}V$ to chunk owners) and a \emph{consumer} (receiving and reducing contributions from other ranks).

\paragraph{Protocol.}
\begin{enumerate}[nosep,leftmargin=*]
\item \textbf{Consumer}: zeroes its receive buffer, then broadcasts a 64-bit bitmask (one bit per expected producer) to all producers via NVSHMEM AMO, signalling readiness.
\item \textbf{Producer}: waits for the readiness signal, issues \texttt{nvshmem\_putmem} for each tile, then after all tiles for a target rank complete: \texttt{nvshmem\_fence} (orders data before signal) followed by \texttt{nvshmem\_long\_atomic\_add($-$(1\,$\ll$\,my\_pe))} to clear its bit on the consumer's bitmask.
\item \textbf{Consumer}: spins until bitmask $= 0$ (all producers done), then launches a local fp32 reduction kernel to accumulate into $\mathrm{d}K_\text{acc}, \mathrm{d}V_\text{acc}$.
\end{enumerate}
A masked chunk that skips the \texttt{put} still clears its bit, preventing the consumer from blocking indefinitely.

\subsubsection{Dual load balancing}
\label{app:rs-load-balance}

\begin{itemize}[nosep,leftmargin=*]
\item \textbf{SM-level}: CTAs dynamically claim work via an atomic counter, preventing stalls from uneven mask sparsity.
\item \textbf{Network-level}: work IDs are interleaved across target ranks ($\text{target} = \texttt{wid} \bmod C + \text{base\_rank}$), distributing traffic uniformly at any instant.
\end{itemize}

\subsubsection{Multi-stage pipeline}
\label{app:bwd-pipeline}

The KV sequence is partitioned into $T$ segments of $C$ chunks each, where $C = 2^{\lfloor \frac{1}{2}\log_2 P\rfloor}$ (e.g., $P{=}16 \Rightarrow C{=}4,\,T{=}4$).
The RS of segment $t$ overlaps with the AG and backward compute of segment $t{+}1$.
Because the forward-saved LSE eliminates cross-segment softmax rescaling, this partition incurs no overhead beyond fp32 $\mathrm{d}Q$ accumulation.
Algorithm~\ref{alg:bwd} gives the full multi-stage backward loop.

\begin{algorithm}[t]
\small
\caption{Multi-stage backward overlap}\label{alg:bwd}
\begin{algorithmic}[1]
\Require $Q, K, V, \mathrm{d}O$; SR buffer $\mathcal{B}$; dKV send/recv buffers $\mathcal{D}_s, \mathcal{D}_r$; segments $T$, chunks/seg $C$
\Ensure $\mathrm{d}Q, \mathrm{d}K, \mathrm{d}V$
\State Compute \texttt{copy\_chunk\_mask}; copy local KV $\to \mathcal{B}$; barrier all PEs
\For{$t = 0, 1, \ldots, T{-}1$}
    \State $\triangleright$ \textbf{Stage 1}: segmented AG (comm stream)
    \ForAll{CTA (dynamic scheduling)}
        \State Fetch segment-$t$ KV chunk $c$ into $\mathcal{B}$ (skip if masked)
        \State \texttt{\_\_threadfence}(); \texttt{work\_done[$c$]} $\gets 1$ \Comment{Per-chunk ready flag}
    \EndFor
    \State $\triangleright$ \textbf{Stage 2}: backward attention (comp stream)
    \State Each compute CTA polls \texttt{work\_done[$c$]} before loading its KV chunk
    \State Compute $\mathrm{d}Q_t$ (accumulated in fp32), write $\mathrm{d}K_t, \mathrm{d}V_t \to \mathcal{D}_s$
    \State $\triangleright$ \textbf{Stage 3}: reduce-scatter (producer/consumer streams, overlaps Stage 1--2 of $t{+}1$)
    \State \textit{Consumer}: zero $\mathcal{D}_r$; broadcast readiness bitmask to producers (AMO)
    \ForAll{CTA (dynamic scheduling)} \Comment{Producer}
        \State $w \gets \textsc{AtomicAdd}(\texttt{counter},\,1)$;\; $\text{target} \gets (w \bmod C) + \text{base\_rank}$
        \If{chunk masked} clear own bit on target's bitmask; \textbf{continue} \EndIf
        \State \textsc{NvshmemPut}($\mathcal{D}_r[\text{target}] \gets \mathcal{D}_s$[tile])
        \If{last tile for target} \texttt{nvshmem\_fence}(); clear bit via AMO \EndIf
    \EndFor
    \State \textit{Consumer}: wait bitmask $= 0$;\; $\mathrm{d}K_\text{acc}, \mathrm{d}V_\text{acc} \mathrel{+}= \textsc{Reduce}(\mathcal{D}_r)$
\EndFor
\end{algorithmic}
\end{algorithm}

\section{More Results}
\label{app:extended-more-results}

\subsection{Experimental setup and supplementary results}
\label{app:exp-setup}

This appendix details the experimental configuration for the single-device kernel benchmark (\S\ref{sec:single-gpu}) and the CP-scale distributed benchmark (\S\ref{sec:exp-cp-scalability}), followed by supplementary results.

\subsubsection{Single-device kernel benchmark}
\label{app:exp-single-setup}

\paragraph{Tensor configuration.}
We fix head dimension $D{=}128$ and set the number of attention heads $H{=}32$ (MHA, no GQA).
Batch size is chosen to saturate GPU memory:
\begin{equation}
  B = \max\!\bigl(\lfloor 128{\times}1024 \,/\, S \rfloor,\; 1\bigr),
  \label{eq:bench-batch}
\end{equation}
yielding $B{=}16$ at $S{=}8\text{K}$ and $B{=}1$ at $S{=}128\text{K}$.
Query, key, and value tensors share the same shape $[B, S, H, D]$.
The mask boundary tensor has shape $[B, H_{\text{mask}}, S, C_{\text{mask}}]$, where $H_{\text{mask}} \in \{1, H\}$ (broadcast across heads when $H_{\text{mask}}{=}1$) and $C_{\text{mask}} \in \{1, 2, 4\}$ depending on mask complexity.

\paragraph{TFLOPs/s computation.}
We account for block-level sparsity following FlashMask's tile-skipping semantics.
Let $\rho$ denote the effective density (fraction of non-skipped tiles).
The FLOPs formulas are:
\begin{align}
  \mathrm{FLOPs}_{\text{fwd}} &= \rho \cdot 4\eta_{\text{fwd}}B H S^2 D, \label{eq:flops-fwd}\\
  \mathrm{FLOPs}_{\text{bwd}} &= \rho \cdot 4\eta_{\text{bwd}} B H S^2 D, \label{eq:flops-bwd}\\
  \mathrm{FLOPs}_{\text{total}} &= \rho \cdot 4\eta_{\text{total}} B H S^2 D, \label{eq:flops-total}
\end{align}
and $\text{TFLOPs/s} = \mathrm{FLOPs} \,/\, (t_{\text{ms}} \times 10^9)$. Empirically, $\eta_{\text{fwd}} = 1$, $\eta_{\text{bwd}} = 2.5$ and $\eta_{\text{total}} = \eta_{\text{fwd}} + \eta_{\text{bwd}}$.

\paragraph{Timing.}
Each configuration is warmed up for 10 iterations.
Measurements are collected over 50 timed iterations using CUDA events (\texttt{paddle.device.Event}), and the mean elapsed time is reported.

\paragraph{Mask types.}
We evaluate 12 representative mask patterns covering diverse real-world workloads: Full, Causal, Sliding Window, Causal Document, Document, Share-Question, Global Sliding Window, Causal Blockwise, Prefix-LM Document, Prefix-LM Causal, QK-sparse, and Random Eviction.

\subsubsection{CP-scale distributed benchmark}
\label{app:exp-dist-setup}

\paragraph{Tensor configuration.}
We fix $B{=}32/P$, $D{=}128$, $H_Q{=}64$ (Q heads), $H_{KV}{=}1$ (KV heads, i.e., GQA with ratio 64:1).
Each rank holds a local KV sequence of $S_{\text{local}}{=}8192$ tokens; the total sequence length is $S = S_{\text{local}} \times P$ where $P$ is the CP size.
QKV tensors per rank: $Q \in \mathbb{R}^{B \times S_{\text{local}} \times H_Q \times D}$, $K, V \in \mathbb{R}^{B \times S_{\text{local}} \times H_{KV} \times D}$.
The mask tensor has shape $[B, H_{KV}, S, C_{\text{mask}}]$ (global mask on the full sequence).

\paragraph{Timing protocol.}
We use CPU wall-clock timing with explicit synchronization barriers.
Before each timed phase (forward or backward), all ranks execute \texttt{barrier()} followed by \texttt{device.synchronize()}, then record \texttt{time.perf\_counter()}.
After each phase, the same barrier--synchronize sequence is repeated before reading the clock.
This protocol isolates forward and backward timings, preventing load imbalance in one phase from contaminating the other.
The rationale is twofold: (1)~in real training pipelines, forward and backward are naturally separated by pipeline scheduling; (2)~distributed collectives provide implicit inter-rank synchronization, so the barrier--synchronize pair ensures a clean measurement boundary.

\paragraph{TFLOPs/s computation.}
Same formulas as Eqs.~\eqref{eq:flops-fwd}--\eqref{eq:flops-total}, divided by $P$ (each rank computes $1/P$ of the total work under ideal balance).
The GQA head count $H_Q{=}64$ is used in the FLOPs numerator.

\subsubsection{Single-device results}
\label{app:exp-single-results}

Figure~\ref{fig:single-card} presents per-mask-type throughput (TFLOPs/s, forward+backward combined) across all tested sequence lengths and mask patterns.

\begin{figure}[t]
  \centering
  \vskip -0.4cm
  \includegraphics[width=\linewidth]{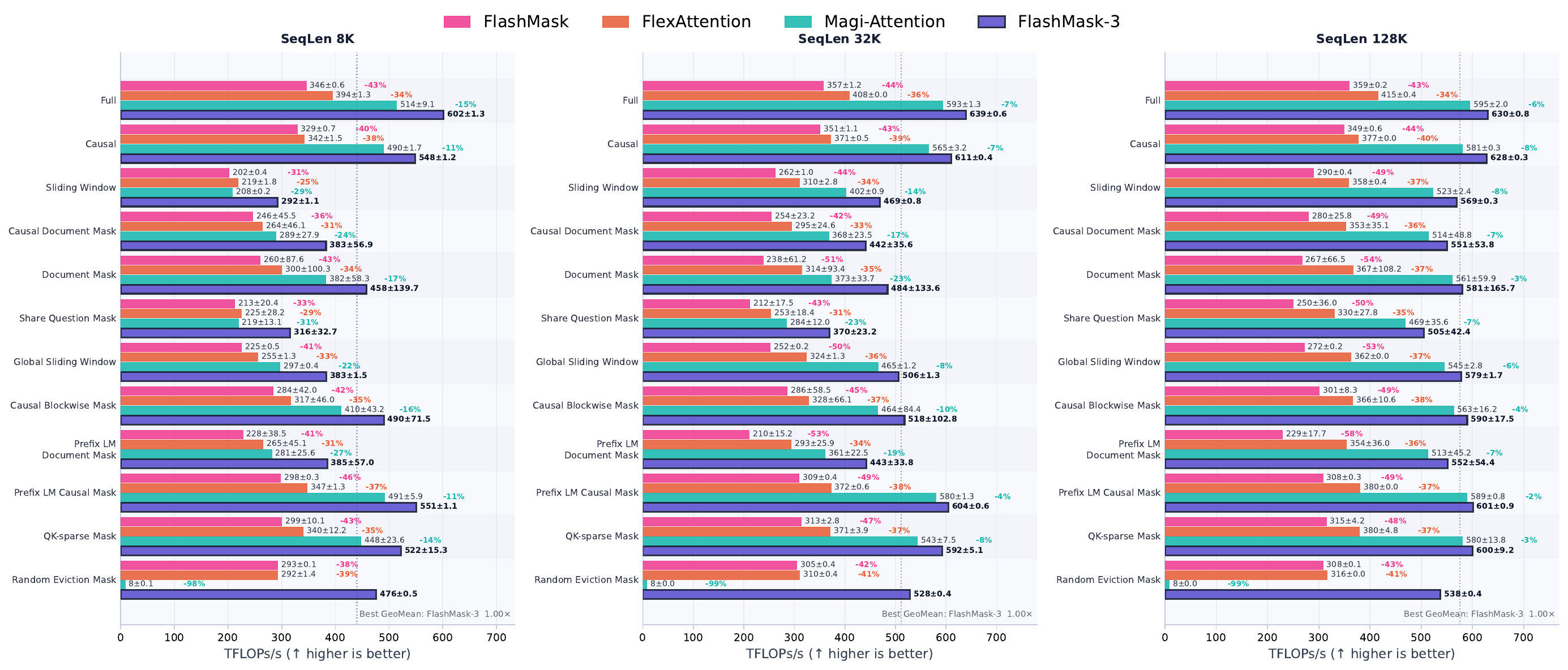}
  \vskip -0.2cm
  \caption{Single-device throughput across mask types and sequence lengths.
  FlashMask-3 consistently outperforms all baselines across all 12 mask patterns.}
  \label{fig:single-card}
  \vskip -0.4cm
\end{figure}

Table~\ref{tab:single-card-agg} summarizes the relative improvements of FlashMask-3 over each baseline, aggregated across all mask types. We additionally present representative long- and short-sequence settings (128K and 8K, respectively) for illustration.

\begin{table}[t]
\centering
\small
\caption{FlashMask-3 throughput improvement over baselines (forward+backward TFLOPs/s). Reported as $(T_{\text{ours}} / T_{\text{baseline}} - 1) \times 100\%$.}
\label{tab:single-card-agg}
\begin{tabular}{@{}llccc@{}}
\toprule
Baseline & Scope & Min & Max & Mean \\
\midrule
FlexAttention  & All  & +33.6\% & +70.5\% & +55.3\% \\
               & 8K   & +33.6\% & +62.9\% & +50.6\% \\
               & 128K & +51.9\% & +70.0\% & +58.9\% \\
\midrule
Magi-Attention & All  & +2.0\%  & +44.1\% & +15.0\% \\
               & 8K   & +12.0\% & +44.1\% & +25.0\% \\
               & 128K & +2.0\%  & +8.9\%  & +5.9\%  \\
\midrule
FlashMask   & All  & +44.9\%  & +140.5\% & +82.6\% \\
               & 8K   & +44.9\%  & +84.8\%  & +66.2\% \\
               & 128K & +74.4\%  & +140.5\% & +97.3\% \\
\bottomrule
\end{tabular}
\end{table}

\paragraph{Analysis.}
FlashMask-3 achieves \textbf{+55.3\%} mean improvement over FlexAttention and \textbf{+15.0\%} over Magi-Attention across all configurations.
The advantage is most pronounced at long sequences (128K), where FlashMask-3 reaches +58.9\% over FlexAttention and up to +140.5\% over FlashMask.
At shorter sequences (8K), the gains remain substantial (+50.6\% over FlexAttention on average), demonstrating robustness across operating regimes.
The improvement over FlashMask (mean +82.6\%, up to $2.4\times$) reflects the combined benefit of the Hopper-native SM90 kernel architecture and the improved tile scheduler.

\subsubsection{Full comparison results under CP scaling}
\label{app:exp-cp-comparison}

Figure~\ref{fig:dist-scale-head64} compares FlashMask-3 (Balance+Overlap) against MagiAttention, Megatron CP (ring), and Megatron CP (all-gather) across all tested mask types and CP sizes.

\begin{figure}[t]
  \centering
  \includegraphics[width=\linewidth]{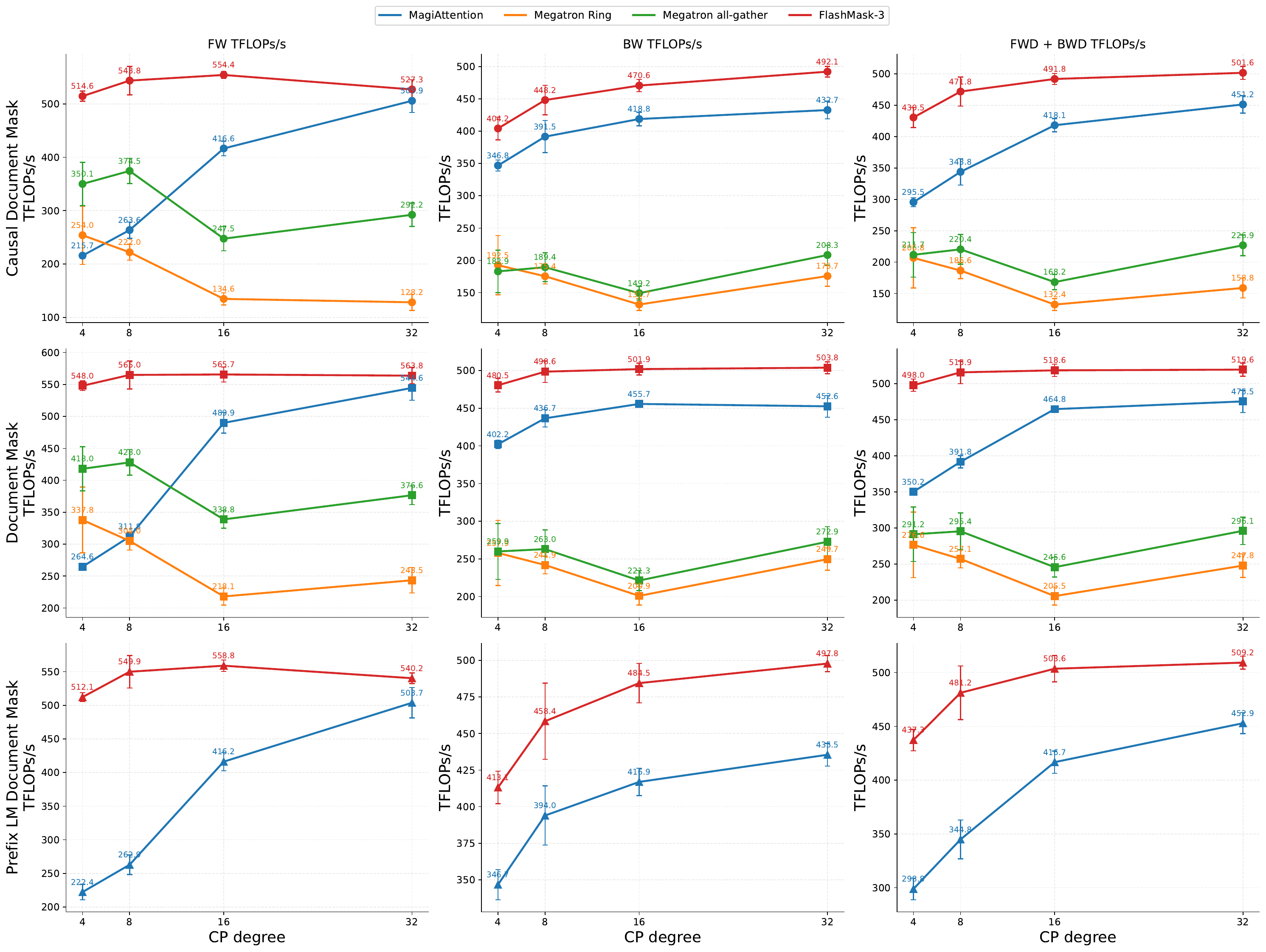}
  \caption{Q Head = 64, KV Head = 1. CP-scale throughput comparison (TFLOPs/s) across mask types and CP degrees.
  Columns: forward, backward, and total (forward+backward). Rows: mask types.}
  \label{fig:dist-scale-head64}
\end{figure}

Table~\ref{tab:cp-compare-agg} reports FlashMask-3's speedup over each baseline, aggregated across all masks and CP sizes (geometric mean).

\begin{table}[t]
\centering
\small
\caption{FlashMask-3 speedup over distributed baselines (aggregated over all masks $\times$ CP sizes).}
\label{tab:cp-compare-agg}
\begin{tabular}{@{}llccc@{}}
\toprule
vs. Baseline & Phase & Min & Max & Geo.\ Mean ± std \\
\midrule
MagiAttention       & FWD   & 1.04$\times$ & 2.39$\times$ & 1.56 ± 0.53$\times$ \\
                    & BWD   & 1.10$\times$ & 1.19$\times$ & 1.15 ± 0.03$\times$ \\
                    & Total & 1.09$\times$ & 1.46$\times$ & 1.26 ± 0.15$\times$ \\
\midrule
Megatron Ring       & FWD   & 1.62$\times$ & 4.12$\times$ & 2.50 ± 0.97$\times$ \\
                    & BWD   & 1.86$\times$ & 3.57$\times$  & 2.38 ± 0.56$\times$ \\
                    & Total & 1.80$\times$ & 3.71$\times$  & 2.42 ± 0.65$\times$ \\
\midrule
Megatron AllGather  & FWD   & 1.31$\times$ & 2.24$\times$ & 1.57 ± 0.31$\times$ \\
                    & BWD   & 1.85$\times$ & 3.15$\times$ & 2.21 ± 0.43$\times$ \\
                    & Total & 1.71$\times$ & 2.92$\times$ & 2.05 ± 0.39$\times$ \\
\bottomrule
\end{tabular}
\end{table}

\paragraph{Analysis.}
FlashMask-3 achieves \textbf{1.26$\times$} mean total throughput over MagiAttention, \textbf{2.42$\times$} over Megatron CP (ring), and \textbf{2.05$\times$} over Megatron CP (all-gather).
The forward pass benefits most from the balanced-overlap protocol (up to 4.12$\times$ over Ring), as communication latency is fully hidden behind computation.
In the backward pass, FlashMask-3 retains a consistent ${\sim}2.2$--$2.4\times$ advantage over both Megatron variants.

\subsection{Ablation study on different scheduling strategies}

\begin{figure}[t]
\centering
\includegraphics[width=\textwidth]{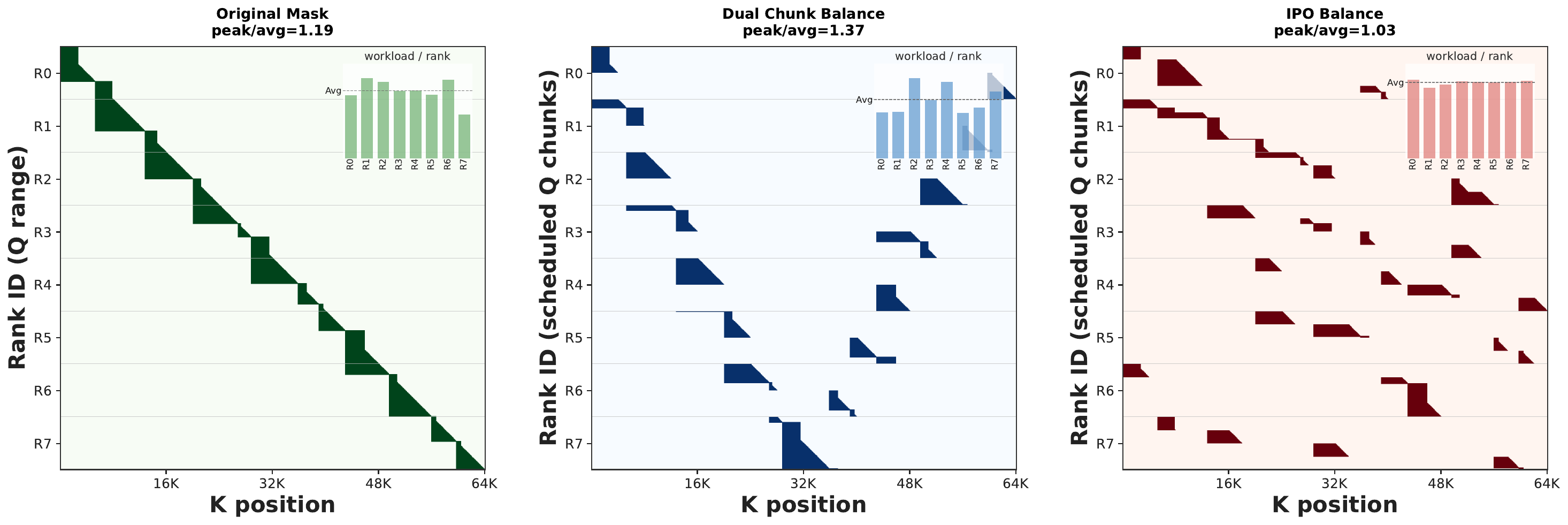}
\vskip -0.4cm
\caption{Comparison of compute load distributions under different scheduling strategies. Dark regions indicate work blocks assigned to each rank. The inset (upper-right of each subplot) shows per-rank compute load, with the dashed line denoting the average. The ``peak/avg'' in each title reports the peak-to-average load ratio across all 8 ranks; lower values indicate more balanced distributions.}
\label{fig:overlap-ipo-balance}
\end{figure}

To further illustrate the effect, Figure~\ref{fig:overlap-ipo-balance} visualizes the per-rank mask layout under three scheduling strategies on a Causal Document Mask with $P{=}8$. The original contiguous assignment exhibits moderate imbalance (peak/avg\,$=$\,1.19). Dual Chunk worsens it to 1.37 because its position-based pairing cannot account for mask-dependent per-chunk costs. IPO reduces peak/avg to 1.03 by reassigning chunks according to their sparsity-aware workload, achieving near-perfect compute balance across all ranks.

\subsection{End-to-end convergence check}
\label{app:convergence}

\begin{figure}[t]
  \centering
  \includegraphics[width=\linewidth]{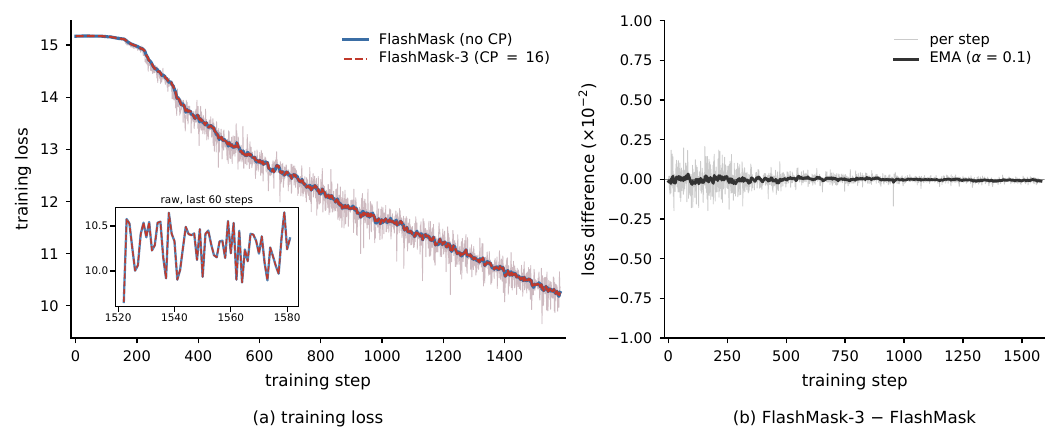}
  \caption{Convergence under context parallelism. \textbf{(a)}~Training loss of the two runs; faint lines are per-step values and bold lines an exponential moving average ($\alpha{=}0.1$), and the inset magnifies the raw curves over the last 60 steps. \textbf{(b)}~Per-step loss difference between the two runs and its moving average, on the same step axis.}
  \label{fig:convergence}
\end{figure}

\S\ref{sec:exp-e2e} argues that FlashMask-3 changes only how attention work is placed and scheduled, never what is computed. We verify this end to end rather than assume it. The baseline runs FlashMask without context parallelism and ours runs FlashMask-3 at CP degree 16, with the same data order, the same initialization, and the same hyperparameters in both, so any divergence would have to originate in the attention path.

Figure~\ref{fig:convergence} covers the $1{,}581$ steps the two runs have in common. The loss curves are indistinguishable over the whole run: they leave the initial plateau at the same step, descend at the same rate, and reproduce the same step-to-step fluctuations, which the inset shows at full temporal resolution. The residual difference in panel~(b) remains at the $10^{-3}$ level throughout and narrows as training proceeds, i.e.\ two orders of magnitude below the fluctuation of the loss itself between consecutive steps. A discrepancy of this magnitude is consistent with a reordered floating-point reduction rather than with a change to the optimization trajectory. Convergence over this horizon is therefore unaffected by context parallelism, as the exactness argument predicts.

\subsection{Comparison with FlashAttention-3}
\label{app:extra-fa3}

Figure~\ref{fig:single-card-perf} already uses FA3~\citep{shah2024flashattention3} as the kernel for the Full and Causal masks. Table~\ref{tab:extra-fa3} adds a direct head-to-head comparison on five masks that FA3 supports, with FlexAttention and MagiAttention measured against the same reference. Values are total forward$+$backward TFLOPs/s, mean ± one standard deviation over 50 iterations, and the ratio to FA3 is given in parentheses.

\begin{table}[t]
\centering
\small
\setlength{\tabcolsep}{4pt}
\caption{Single-GPU throughput on five masks supported by FA3 (total forward$+$backward TFLOPs/s, mean ± std over 50 iterations). Parentheses give the ratio to FA3.}
\label{tab:extra-fa3}
\resizebox{\linewidth}{!}{%
\begin{tabular}{@{}llcccc@{}}
\toprule
Seq & Mask & FA3 & FlashMask-3 & FlexAttention & MagiAttention \\
\midrule
8K   & Full             & 604.43 ± 1.27  & 569.84 ± 1.88 (0.94)  & 394.15 ± 1.34 (0.65)  & 513.66 ± 8.95 (0.85) \\
8K   & Causal           & 579.73 ± 0.35  & 520.85 ± 0.54 (0.90)  & 341.68 ± 1.45 (0.59)  & 482.17 ± 1.71 (0.83) \\
8K   & Sliding Window   & 407.23 ± 1.54  & 290.30 ± 1.04 (0.71)  & 218.93 ± 1.73 (0.54)  & 166.22 ± 0.15 (0.41) \\
8K   & Causal Document  & 478.96 ± 27.20 & 377.86 ± 35.26 (0.79) & 264.06 ± 18.37 (0.55) & 247.92 ± 47.43 (0.52) \\
8K   & Document         & 546.29 ± 17.93 & 460.49 ± 31.77 (0.84) & 299.46 ± 26.68 (0.55) & 352.82 ± 48.32 (0.65) \\
\midrule
32K  & Full             & 616.65 ± 0.65  & 603.63 ± 1.48 (0.98)  & 408.22 ± 0.02 (0.66)  & 593.40 ± 1.30 (0.96) \\
32K  & Causal           & 611.66 ± 0.59  & 587.87 ± 0.88 (0.96)  & 371.48 ± 0.47 (0.61)  & 563.30 ± 3.20 (0.92) \\
32K  & Sliding Window   & 534.47 ± 1.71  & 465.00 ± 0.68 (0.87)  & 309.52 ± 2.77 (0.58)  & 378.13 ± 0.81 (0.71) \\
32K  & Causal Document  & 532.68 ± 6.32  & 435.54 ± 15.28 (0.82) & 294.92 ± 8.42 (0.55)  & 328.81 ± 21.48 (0.62) \\
32K  & Document         & 592.03 ± 5.04  & 503.66 ± 6.68 (0.85)  & 312.10 ± 19.68 (0.53) & 431.37 ± 17.05 (0.73) \\
\midrule
128K & Full             & 603.12 ± 1.14  & 602.06 ± 0.92 (1.00)  & 415.16 ± 0.40 (0.69)  & 595.04 ± 2.00 (0.99) \\
128K & Causal           & 618.52 ± 0.48  & 587.92 ± 1.33 (0.95)  & 377.21 ± 0.01 (0.61)  & 580.09 ± 0.34 (0.94) \\
128K & Sliding Window   & 592.51 ± 0.46  & 561.46 ± 1.40 (0.95)  & 358.07 ± 0.40 (0.60)  & 514.87 ± 2.36 (0.87) \\
128K & Causal Document  & 591.87 ± 3.64  & 542.68 ± 6.73 (0.92)  & 352.92 ± 3.70 (0.60)  & 498.60 ± 8.79 (0.84) \\
128K & Document         & 615.62 ± 1.85  & 575.86 ± 4.11 (0.94)  & 367.07 ± 7.90 (0.60)  & 546.12 ± 3.83 (0.89) \\
\bottomrule
\end{tabular}}
\end{table}

FA3 specializes the masks it supports, so we do not expect FlashMask-3 to overtake it there; the question is what the general mask representation costs. On Full and Causal the ratio is 0.90 to 1.00 and improves with sequence length, reaching parity at 128K on Full. On Sliding Window and the two document masks it is lower at 8K and recovers to 0.92 to 0.95 at 128K, the regime long-context training operates in. Against the methods that accept masks outside the FA3 set the margin is large: FlexAttention stays between 0.53 and 0.69 throughout, and MagiAttention falls further behind on the structured and shorter cases, down to 0.41 on Sliding Window at 8K.

A second question is how much of the single-GPU gain comes from moving to Hopper rather than from the warp specialization of \S\ref{sec:method-kernel}. Porting the Ampere FlashMask kernel onto the FA3 pipeline is not straightforward on its own: the inline mask-index computation spills registers and fills the pipeline with non-MMA instructions. Table~\ref{tab:extra-port} compares the original FlashMask, an FA3-style Hopper port without the KV Index Generator, and the full kernel. We report the forward pass only: the register pressure appeared during the port itself and we did not implement a backward pass for that configuration, so forward is where all three are measured on equal footing.

\begin{table}[t]
\centering
\small
\setlength{\tabcolsep}{5pt}
\caption{Isolating the warp specialization from the Hopper port (forward TFLOPs/s, mean ± std). Parentheses give the ratio to FlashMask-3.}
\label{tab:extra-port}
\begin{tabular}{@{}llccc@{}}
\toprule
Seq & Mask & FlashMask-3 & FlashMask & FA3-style Hopper port \\
\midrule
8K   & Full            & 596.53 ± 2.26  & 384.88 ± 1.84 (0.65)  & 214.01 ± 0.23 (0.36) \\
8K   & Causal          & 553.31 ± 1.13  & 373.46 ± 1.37 (0.67)  & 241.86 ± 0.13 (0.44) \\
8K   & Sliding Window  & 432.15 ± 6.60  & 256.63 ± 0.07 (0.59)  & 35.77 ± 0.07 (0.08) \\
8K   & Causal Document & 483.97 ± 29.40 & 302.03 ± 15.27 (0.62) & 64.77 ± 15.83 (0.13) \\
8K   & Document        & 532.26 ± 24.90 & 287.89 ± 34.54 (0.54) & 55.03 ± 14.14 (0.10) \\
\midrule
32K  & Full            & 614.19 ± 1.15  & 388.92 ± 1.84 (0.63)  & 214.22 ± 0.08 (0.35) \\
32K  & Causal          & 593.97 ± 0.87  & 379.46 ± 1.65 (0.64)  & 244.95 ± 0.14 (0.41) \\
32K  & Sliding Window  & 553.66 ± 2.78  & 274.60 ± 0.54 (0.50)  & 31.20 ± 0.08 (0.06) \\
32K  & Causal Document & 515.48 ± 10.22 & 256.39 ± 9.50 (0.50)  & 25.29 ± 2.81 (0.05) \\
32K  & Document        & 544.56 ± 5.93  & 191.12 ± 36.15 (0.35) & 21.42 ± 2.48 (0.04) \\
\midrule
128K & Full            & 613.49 ± 2.15  & 390.66 ± 0.19 (0.64)  & 211.51 ± 0.53 (0.34) \\
128K & Causal          & 544.83 ± 0.55  & 359.79 ± 0.52 (0.66)  & 245.99 ± 0.02 (0.45) \\
128K & Sliding Window  & 573.63 ± 0.94  & 274.44 ± 0.67 (0.48)  & 29.96 ± 0.01 (0.05) \\
128K & Causal Document & 554.50 ± 9.57  & 252.96 ± 9.03 (0.46)  & 22.15 ± 2.44 (0.04) \\
128K & Document        & 577.55 ± 3.33  & 217.27 ± 40.17 (0.38) & 18.97 ± 2.08 (0.03) \\
\bottomrule
\end{tabular}
\end{table}

The port alone does not deliver the gain. On Full and Causal it reaches 0.34 to 0.45 of the full kernel, below even the Ampere kernel it was meant to modernize, and on the structured masks it collapses to 0.03 to 0.13. The KV Index Generator removes this bottleneck by moving mask-index generation onto reclaimed producer threads, off the MMA critical path. The improvement is therefore attributable to the warp specialization rather than to the change of architecture, and a direct port is not a viable route to an expressive mask-aware kernel on Hopper.

\subsection{Scheduling cost and overlap robustness}
\label{app:extra-overhead}

Both the per-chunk weights $w_i$ (\S\ref{app:workload-computation}) and the HCC-IPO assignment operate on the four boundary vectors rather than on a dense mask, so their cost grows linearly with the sequence length and not with the quadratic tile count. Their latency is also not on the critical path. For a fixed set of weights the assignment is deterministic, so we compute it ahead of time and overlap it with the compute and communication of the previous iteration, and reuse it across layers through a cache keyed on the mask pattern.

The cost is not negligible in absolute terms, which is why it is prefetched rather than run inline. Table~\ref{tab:extra-sched} reports the standalone scheduling latency at $P{=}32$ with chunk size 2048 next to the attention time under the same configuration: 23 to 44\,ms of scheduling against 35 to 67\,ms of attention, the same order of magnitude. Table~\ref{tab:extra-sched-e2e} measures the same preprocessing inside the 128-GPU configuration of \S\ref{sec:exp-e2e}, in a separate run, where it accounts for 0.60\% to 0.75\% of the step. Prefetching therefore keeps the scheduling latency out of the end-to-end time.

\begin{table}[t]
\centering
\small
\setlength{\tabcolsep}{4pt}
\begin{minipage}[t]{0.47\linewidth}
\centering
\caption{Standalone scheduling latency against attention time ($P{=}32$, chunk 2048).}
\label{tab:extra-sched}
\resizebox{\linewidth}{!}{%
\begin{tabular}{@{}lcc@{}}
\toprule
Mask & Attention (ms) & Schedule (ms) \\
\midrule
Causal Document    & 35.08 ± 4.19 & 42.75 ± 8.90 \\
Document           & 67.44 ± 8.78 & 23.52 ± 6.34 \\
Prefix LM Document & 35.84 ± 4.29 & 43.71 ± 6.33 \\
\bottomrule
\end{tabular}}
\end{minipage}\hfill
\begin{minipage}[t]{0.50\linewidth}
\centering
\caption{Balance preprocessing inside the 128-GPU run of \S\ref{sec:exp-e2e}.}
\label{tab:extra-sched-e2e}
\resizebox{\linewidth}{!}{%
\begin{tabular}{@{}lccc@{}}
\toprule
Sparsity & Step (ms) & Preprocess (ms) & Overhead (\%) \\
\midrule
70\% & 20014.2 ± 1317.6 & 119.9 ± 16.0 & 0.60 \\
80\% & 17887.3 ± 2020.8 & 110.1 ± 10.2 & 0.62 \\
86\% & 16174.8 ± 2146.6 & 120.6 ± 10.4 & 0.75 \\
\bottomrule
\end{tabular}}
\end{minipage}
\end{table}

Computing the assignment offline has two consequences. First, for a model whose layers use different mask types we do not switch the assignment per layer. We score each mask type, take a weighted sum over the proportion of those types in the stack, and solve HCC-IPO once on that aggregate profile. When the mask type changes often from layer to layer, a single static assignment fits each individual layer less well, and re-solving at that frequency would add latency that prefetching cannot hide. Second, HCC-IPO does not replace the default layout unconditionally: we compare the predicted cost of the unified assignment against the Dual Chunk layout and keep HCC-IPO only when it is faster, so balanced workloads such as full or causal masks stay on Dual Chunk and pay no rebalancing cost.

The overlap depends on having enough attention compute to hide the all-gather and reduce-scatter, and very short or very sparse contexts leave less to hide behind. This is the amortization caveat of Appendix~\ref{app:limitations}. The degradation is graceful rather than abrupt: the stream coordination of Appendix~\ref{app:stream-coord} keeps kernel ordering deadlock-free across sparsity levels, so in the worst case the overlap recovers less of the communication instead of failing.

\end{document}